\DeclareMathOperator*{\maximize}{maximize}
\DeclareMathOperator*{\argmin}{argmin}
\DeclareMathOperator*{\minimize}{minimize}
\DeclareMathOperator*{\subjectto}{subject\;to}
\DeclareMathOperator*{\diag}{diag}
\newcommand{\dd}{\mathsf{d}}
\newcommand{\R}{\mathbb{R}}
\newtheorem{proposition}{Proposition}
\newcommand*\Let[2]{\State #1 $\gets$ #2}
\definecolor{lightgray}{gray}{0.95} 
\icmltitlerunning{Input Convex Neural Networks}
\begin{document}

\twocolumn[
\icmltitle{Input Convex Neural Networks}

\icmlsetsymbol{atcmu}{*}

\begin{icmlauthorlist}
\icmlauthor{Brandon Amos}{cmu}
\icmlauthor{Lei Xu}{tsinghua,atcmu}
\icmlauthor{J.~Zico Kolter}{cmu}
\end{icmlauthorlist}
\icmlaffiliation{cmu}{School of Computer Science,
  Carnegie Mellon University. Pittsburgh, PA, USA}
\icmlaffiliation{tsinghua}{Department of Computer Science and Technology,
  Tsinghua University. Beijing, China}

\icmlcorrespondingauthor{Brandon Amos}{bamos@cs.cmu.edu}
\icmlcorrespondingauthor{J.~Zico Kolter}{zkolter@cs.cmu.edu}

\icmlkeywords{deep learning, convex optimization, structured prediction,
  reinforcement learning}

\vskip 0.3in
]

\printAffiliationsAndNotice{\textsuperscript{*}Work done
  while author was at Carnegie Mellon University.}

\begin{abstract}
This paper presents the input convex neural network
architecture. These are scalar-valued (potentially deep) neural
networks with constraints on the network parameters such that the
output of the network is a convex function of (some of) the inputs.
The networks allow for efficient inference via optimization over some
inputs to the network given others, and can be applied to settings
including structured prediction, data imputation, reinforcement
learning, and others.  In this paper we lay the basic groundwork for
these models, proposing methods for inference, optimization and
learning, and analyze their representational power.  We show that many
existing neural network architectures can be made input-convex with
a minor modification, and develop specialized optimization
algorithms tailored to this setting. Finally, we highlight the
performance of the methods on multi-label prediction, image
completion, and reinforcement learning problems, where we show
improvement over the existing state of the art in many cases.
\end{abstract}

\setcounter{footnote}{2} 
\section{Introduction}

In this paper, we propose a new
neural network architecture that we call the \emph{input convex neural network}
(ICNN).These are \emph{scalar-valued} neural networks $f(x, y;\theta)$ where $x$
and
$y$ denotes inputs to the function and $\theta$
denotes the parameters, built in such a way that the network is convex in
(a subset of) \emph{inputs} $y$.\footnotemark~
The fundamental benefit to these ICNNs is that we can \emph{optimize} over
the convex inputs to the network given some fixed value for other inputs.  That
is, given some fixed $x$ (and possibly some fixed elements of $y$) we can
globally and efficiently (because the problem is convex) solve the optimization
problem
\begin{equation}
\argmin_{y} f(x, y; \theta).
\label{eq:generic-argmin}
\end{equation}
Fundamentally, this
formalism lets us perform inference in the network via \emph{optimization}.
That is, instead of making predictions in a neural network via a purely
feedforward process, we can make predictions by optimizing a scalar function
(which effectively plays the role of an energy function) over some
inputs to the function given others.  There are a number of potential use cases
for these networks.

\footnotetext{We emphasize the term ``input convex''
since convexity in machine learning typically refers to convexity (of the loss
minimization learning problem) in the \emph{parameters}, which is not the case
here.  Note that in our notation, $f$ needs only be a convex function in
$y$, and may still be non-convex in the remaining inputs $x$.  Training these
neural networks remains a nonconvex problem, and the
convexity is only being exploited at inference time.}

\paragraph{Structured prediction}  As is perhaps apparent from our
notation above, a key application of this work is in structured prediction.
Given (typically high-dimensional) structured input and output spaces $\mathcal
{X} \times \mathcal{Y}$, we can build a network over $(x,y)$ pairs
that encodes the energy function for this pair, following typical energy-based
learning formalisms \citep{lecun2006tutorial}.  Prediction involves finding the
$y \in \mathcal {Y}$ that
minimizes the energy for a given $x$, which is exactly
the argmin problem in \eqref{eq:generic-argmin}.
In our setting, assuming that $\mathcal{Y}$ is a convex space (a common
assumption in structured prediction), this optimization
problem is convex.
This is similar in nature to the structured prediction energy networks (SPENs)
\citep{belanger2016structured}, which also use deep networks over the input and
output spaces, with the difference being that in our setting $f$ is convex in
$y$, so the optimization can be performed globally.

\paragraph{Data imputation}  Similar to structured prediction
but slightly more generic, if we are given some space $\mathcal{Y}$
we can learn a network $f(y;\theta)$ (removing the additional $x$
inputs, though these can be added as well) that, given an example with
some subset $\mathcal{I}$ missing, imputes the likely values of these variables
by solving the optimization problem as above $\hat{y}_\mathcal{I} = \argmin_
{y_\mathcal{I}} f(y_{\mathcal{I}}, y_{\bar {\mathcal{I}}}; \theta)$
This could be used e.g., in image inpainting
where the goal is to fill in some arbitrary set of missing pixels given observed
ones.

\paragraph{Continuous action reinforcement learning}
Given a reinforcement learning problem with potentially continuous
state and action spaces $\mathcal
{S} \times \mathcal{A}$,
we can model the (negative) $Q$ function,
$-Q(s,a;\theta)$ as an input convex neural network.  In this case the action
selection procedure can be formulated as a convex optimization problem
$a^\star(s) = \argmin_a -Q(s,a;\theta)$.

This paper lays the
foundation for optimization, inference, and learning in these input convex
models, and explores their performance in the applications above.  Our main
contributions are: we propose the ICNN
architecture and a partially convex variant; we develop
efficient optimization and inference procedures that are well-suited to the
complexity of these specific models; we propose techniques for training these
models, based upon either max-margin structured prediction or direct
differentiation of the argmin operation; and we evaluate the system on
multi-label prediction, image completion, and reinforcement learning domains;
in many of these settings we show performance that improves upon the
 state of the art.


\section{Background and related work}

\paragraph{Energy-based learning} The interplay between inference, optimization,
and structured prediction has a long history in neural networks.  Several early
incarnations of
neural networks were explicitly trained to produce structured sequences
(e.g. \cite{simard1991reverse}),
and there was an early appreciation that structured models like hidden Markov
models could be combined with the outputs of neural networks
\citep{bengio-lecun-henderson-94}.  Much of this earlier work is surveyed and
synthesized by \cite{lecun2006tutorial}, who give a tutorial on these energy
based learning
methods.
In recent years, there has been a strong push to further incorporate
structured prediction methods like conditional random fields as the ``last
layer'' of a deep network architecture
\citep{peng2009conditional,zheng2015conditional,chen2015learning}.
Several methods have proposed to build general neural networks over joint input
and output spaces, and perform inference over outputs using generic optimization
techniques such as Generative Adversarial Networks (GANs) \citep{goodfellow2014generative}
and Structured Prediction Energy Networks (SPENs) \citep{belanger2016structured}.
SPENs provide a deep structure over input and output spaces
that performs the inference in \eqref{eq:generic-argmin} as
a non-convex optimization problem.

The current work is highly related to these past approaches, but also differs in
a very particular way.  To the best of our knowledge, each of these structured
prediction methods based upon energy-based models operates in one of two ways,
either: 1) the architecture is built in a very particular way such that
optimization over the output is guaranteed to be ``easy'' (e.g. convex, or the
result of running some inference procedure), usually by introducing a
structured linear objective at the last layer of the network; or 2) no attempt
is made to make the architecture ``easy'' to run inference over, and instead a
general model is built over the output space.  In contrast, our approach lies
somewhere in between: by ensuring convexity of the resulting decision space, we
are constraining the inference problem to be easy in some respect, but we
specify very
little about the architecture other than the constraints required to make
it convex.  In particular, as we will show, the network architecture over the
variables to be optimized over can be deep and involve multiple
non-linearities.  The goal of the proposed work is to allow
for complex functions over the output without needing to specify them manually
(exactly analogous to how current deep neural networks treat their input space).

\paragraph{Structured prediction and MAP inference}
Our work also draws some
connection to MAP-inference-based learning and
approximate inference.  There are two broad classes of learning approaches in
structured prediction: method that use probabilistic inference techniques
(typically exploiting the fact that the gradient of log likelihood is given by
the actual feature expectations minus their expectation under the learned
model \citep[Ch 20]{koller2009probabilistic}), and methods that rely solely upon
MAP inference (such as max-margin structured prediction
\citep{taskar2005learning,tsochantaridis2005large}).  MAP inference in particular also
has close connections to optimization, as various convex relaxations of the
general MAP inference problem often perform well in theory and practice.
The proposed methods can be viewed as an extreme case of this second class of
algorithm, where inference is based \emph{solely} upon a convex optimization
problem that may not have any probabilistic semantics at all.  Finally, although
it is more abstract, we feel there is a philosophical similarity between our
proposed approach and sum-product networks \citep{poon2011sum}; both settings
define networks where inference is accomplished ``easily'' either by a
sum-product message passing algorithm (by construction) or via convex
optimization.

\paragraph{Fitting convex functions}
Finally, the proposed work relates to a
topic less considered in the machine learning literature, that of fitting convex
functions to data \citep[pg. 338]{boyd2004convex}.  Indeed our learning problem
can be viewed as
parameter estimation under a model that is guaranteed to be convex by its
construction.  The most similar work of which we
are aware specifically fits sums of rectified half-planes to data
\citep{magnani2009convex}, which
is similar to one layer of our rectified linear units.  However, the actual
training scheme is much different, and our deep network architecture allows for
a much richer class of representations, while still maintaining convexity.


\section{Convex neural network architectures}

Here we more formally present different ICNN architectures and
prove their convexity properties given certain constraints on the parameter
space.  Our chief claim is that the class of (full and partial) input convex
models is rich and lets us capture complex joint models over the input to a
network.

\subsection{Fully input convex neural networks}

\begin{figure}[t]
  \centering
  \includegraphics[width=0.4\textwidth]{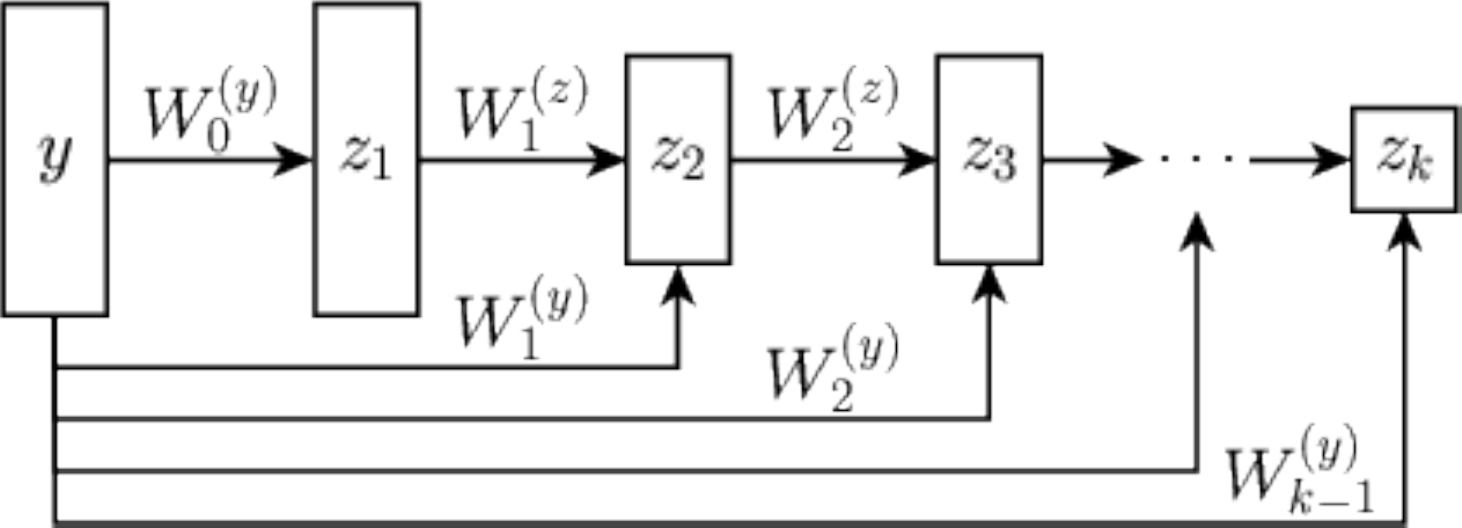}
  \caption{A fully input convex neural network (FICNN).}
  \label{fig:ficnn}
\end{figure}

To begin, we consider a fully convex, $k$-layer, fully connected ICNN
that we call a FICNN and is shown in Figure \ref{fig:ficnn}.
This model defines a neural network over the input $y$
(i.e., omitting any $x$ term in this function) using the architecture for
$i=0,\ldots,k-1$
\begin{equation}
\begin{split}
\label{eq-ficnn}
z_{i+1} & = g_i\left(W^{(z)}_i z_i + W^{(y)}_i y + b_i \right ), \;\; f
(y;\theta) = z_k
\end{split}
\end{equation}
where $z_i$ denotes the layer activations (with $z_0, W^{(z)}_0 \equiv 0$),
$\theta = \{W^ {(y)}_{0:k-1}, W^{(z)}_{1:k-1}, b_{0:k-1}\}$ are the
parameters, and $g_i$ are non-linear activation functions.  The central result
on convexity of the network is the following:
\begin{proposition}\label{prop-convex}
The function $f$ is convex in $y$ provided that all
$W^{(z)}_{1:k-1}$ are non-negative, and all functions $g_i$ are convex and
non-decreasing.
\end{proposition}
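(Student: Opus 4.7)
The plan is to prove the proposition by induction on the layer index $i$, establishing the stronger statement that each component of the hidden activation $z_i$ is a convex function of $y$. Since $f(y;\theta) = z_k$, convexity of $f$ in $y$ follows immediately from the $i=k$ case. The proof will rely on two standard facts from convex analysis: (a) a non-negative weighted sum of convex functions is convex, and the sum of a convex function with an affine function is convex; and (b) if $h: \R^n \to \R$ is convex and $g: \R \to \R$ is convex and non-decreasing, then $g \circ h$ is convex. Both restrictions in the proposition statement (non-negativity of $W^{(z)}_{1:k-1}$ and the non-decreasing property of $g_i$) are precisely what is needed to apply these two rules.

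For the base case, observe that $z_1 = g_0(W^{(y)}_0 y + b_0)$ since $z_0 \equiv 0$. The argument is affine in $y$, and each component of $g_0$ is convex by assumption; composing a convex scalar function with an affine map yields a convex function, so each component of $z_1$ is convex in $y$.

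For the inductive step, assume each component of $z_i$ is convex in $y$. Consider a single component of the pre-activation $a_{i+1} = W^{(z)}_i z_i + W^{(y)}_i y + b_i$: it is a linear combination of the components of $z_i$ with coefficients drawn from a row of $W^{(z)}_i$ (which is non-negative by hypothesis for $i \ge 1$), plus an affine function of $y$. By fact (a), each component of $a_{i+1}$ is convex in $y$. Then $z_{i+1} = g_i(a_{i+1})$ is the component-wise application of a convex non-decreasing scalar function to a convex function, so by fact (b), each component of $z_{i+1}$ is convex in $y$. This completes the induction, and taking $i=k$ gives convexity of $f$.

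The main conceptual obstacle—rather than a technical one—is identifying why both hypotheses are truly necessary: without $W^{(z)}_i \ge 0$, the pre-activation would be a general linear combination of convex functions and need not be convex; without $g_i$ being non-decreasing, the outer composition step could destroy convexity even when the inner argument is convex. The rest is a routine application of the convex analysis composition rules, so I do not anticipate any subtle difficulty in writing out the details.
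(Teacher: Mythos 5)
Your proof is correct and follows exactly the argument the paper sketches: induction over layers using the facts that non-negative weighted sums of convex functions are convex and that composing a convex non-decreasing function with a convex function preserves convexity. You have simply written out in full the one-line justification the authors give, so there is nothing to add.
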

The proof is simple and follows from the fact that
non-negative sums of convex functions are also convex and that the composition
of a convex and convex non-decreasing function is also convex (see e.g.
\citet[3.2.4]{boyd2004convex}).  The constraint that the $g_i$ be convex
non-decreasing is not particularly restrictive, as current non-linear activation
units like the rectified linear unit or max-pooling unit already satisfy this
constraint.  The constraint that the $W^{(z)}$ terms be non-negative is
somewhat restrictive, but because the bias terms and $W^{(y)}$ terms can be
negative, the network still has substantial representation power, as we will
shortly demonstrate empirically.

One notable addition in the ICNN are the ``passthrough'' layers that directly
connect the input $y$ to hidden units in deeper layers.  Such layers are
unnecessary in traditional feedforward networks because previous hidden units
can always be mapped to subsequent hidden units with the identity mapping;
however, for ICNNs, the non-negativity constraint subsequent $W^{(z)}$ weights
restricts the allowable use of hidden units that mirror the identity mapping,
and so we explicitly include this additional passthrough.  Some
passthrough layers have been recently explored in the deep residual networks
\citep{he2015deep} and densely connected convolutional
networks \citep{huang2016densely},
though these differ from those of an ICNN as they pass through
hidden layers deeper in the network, whereas to maintain convexity our
passthrough layers can only apply to the input directly.

Other linear operators like convolutions can
be included in ICNNs without changing the convexity properties.
Indeed, modern feedforward architectures such as AlexNet
\citep{krizhevsky2012imagenet}, VGG \citep{simonyan2014very}, and GoogLeNet
\citep{szegedy2015going} with ReLUs \citep{nair2010rectified} can be made
input convex with Proposition~\ref{prop-convex}.  In the experiment that follow,
we will explore ICNNs with both fully connected and convolutional layers, and we
provide more detail about these additional architectures in
Section~\ref{sec:additional-arch} of the supplement.

\subsection{Partially input convex architectures}\label{sec:picnn}
\begin{figure}[t]
  \centering
  \includegraphics[width=0.4\textwidth]{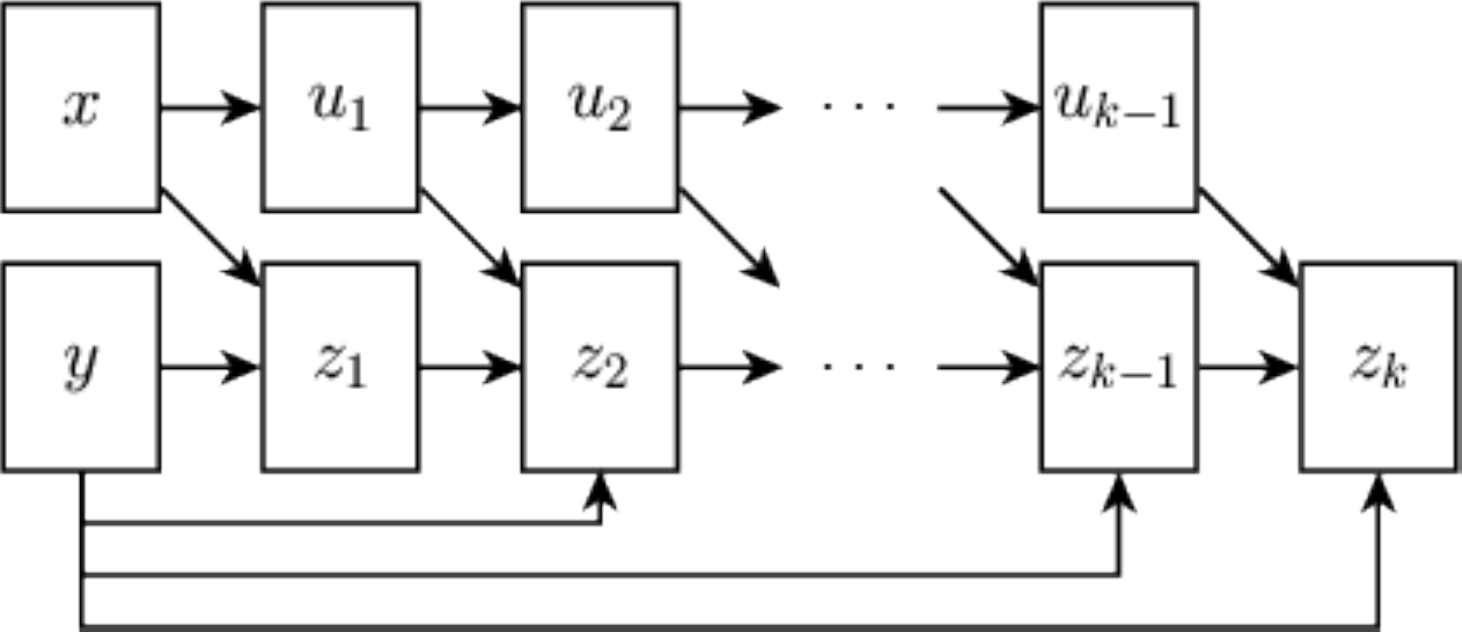}
  \caption{A partially input convex neural network (PICNN).}
  \label{fig:picnn}
\end{figure}
The FICNN provides joint convexity over the entire input to the function, which
indeed may
be a restriction on the allowable class of models.  Furthermore, this full joint
convexity is unnecessary in settings like structured prediction where the neural
network is used to build a joint model over an input and output example space
and only convexity over the outputs is necessary.

In this section we propose an extension to the pure FICNN, the partially
input convex neural network (PICNN), that is convex over only some inputs to the
network (in general ICNNs will refer to this new class). As we will show, these
networks generalize both
traditional feedforward networks and FICNNs, and thus provide substantial
representational benefits.  We define a PICNN to be a network over $(x,y)$ pairs
$f(x,y;\theta)$ where $f$ is convex in $y$ but not convex in $x$.
Figure~\ref{fig:picnn} illustrates one potential $k$-layer PICNN architecture
defined by the recurrences
\begin{equation}
\begin{split}
u_{i+1} & = \tilde{g}_i(\tilde{W}_i u_i + \tilde{b}_i) \\
z_{i+1} & = g_i \left( W^{(z)}_i \left (z_i \circ [W_i^{(zu)} u_i + b^{
      (z)}_i]_+ \right ) + \right. \\
  & \left. W^{(y)}_i \left (y \circ (W_i^{(yu)} u_i + b^{(y)}_i)\right)  + W^{(u)}_i
u_i + b_i \right ) \\
f(x,y;\theta) & = z_k, \; u_0 = x
\end{split}
\end{equation}
where $u_i \in \mathbb{R}^{n_i}$ and $z_i \in \mathbb{R}^{m_i}$ denote
the hidden units for the ``$x$-path'' and ``$y$-path'', where $y \in
\mathbb{R}^p$, and where $\circ$ denotes the Hadamard product, the
elementwise product between two vectors.  The crucial element here is that
unlike the FICNN, we only need the $W^{(z)}$ terms to be non-negative, and we
can introduce arbitrary products \emph{between} the $u_i$ hidden units and the
$z_i$ hidden units.
The following proposition highlights the representational
power of the PICNN.
\begin{proposition}
A PICNN network with $k$ layers can represent any FICNN with $k$ layers and any
purely feedforward network with $k$ layers.
\end{proposition}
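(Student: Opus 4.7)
The plan is to prove both representation claims constructively, by exhibiting explicit parameter settings for the PICNN that collapse its recurrence onto the FICNN recurrence and onto a purely feedforward recurrence, respectively. In both cases, the resulting $f(x,y;\theta) = z_k$ will coincide identically with the target network's output.

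\emph{FICNN embedding.} Comparing the PICNN $z$-recurrence with the FICNN recurrence~\eqref{eq-ficnn}, the only extra structures are the two Hadamard-product gates and the additive term $W^{(u)}_i u_i$. Setting, at every layer $i$, $W^{(zu)}_i = 0$ with $b^{(z)}_i = \mathbf{1}$ makes $[W^{(zu)}_i u_i + b^{(z)}_i]_+ = [\mathbf{1}]_+ = \mathbf{1}$, so that $z_i \circ \mathbf{1} = z_i$; symmetrically, $W^{(yu)}_i = 0$ with $b^{(y)}_i = \mathbf{1}$ reduces the $y$-gate to $y$; and $W^{(u)}_i = 0$ erases the extra $u$-contribution. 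Taking $W^{(z)}_i$, $W^{(y)}_i$, $b_i$, $g_i$ equal to those of the target FICNN leaves $z_{i+1} = g_i(W^{(z)}_i z_i + W^{(y)}_i y + b_i)$, exactly~\eqref{eq-ficnn}. The $u$-path is then unused and its parameters may be arbitrary, and the PICNN constraints ($W^{(z)}_i \geq 0$, $g_i$ convex non-decreasing) are inherited from the target FICNN via Proposition~\ref{prop-convex}.

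\emph{Feedforward embedding.} The PICNN's $u$-path is itself an unconstrained feedforward recurrence $u_{i+1} = \tilde{g}_i(\tilde{W}_i u_i + \tilde{b}_i)$ initialized at $u_0 = x$, so I would set its weights, biases, and activations equal to those of the target feedforward network; the $u$-path then computes exactly the target's hidden states, with $u_k$ being its output. It remains to force $z_k = u_k$ through the $z$-path. I would leave the $z$-path dormant for layers $i = 0, \ldots, k-2$ (taking $W^{(z)}_i, W^{(y)}_i, W^{(u)}_i, b_i$ all zero, so $z_{i+1} = g_i(0)$ is a constant that is harmless) and, at the final layer $i = k-1$, set $W^{(z)}_{k-1} = 0$, $W^{(y)}_{k-1} = 0$, $W^{(u)}_{k-1} = \tilde{W}_{k-1}$, $b_{k-1} = \tilde{b}_{k-1}$, and $g_{k-1} = \tilde{g}_{k-1}$. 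This yields $z_k = g_{k-1}(W^{(u)}_{k-1} u_{k-1} + b_{k-1}) = u_k$. Since $z_k$ has no $y$-dependence, convexity in $y$ holds vacuously.

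\emph{Main obstacle.} The subtlety I expect is reconciling the usual PICNN restriction that the $z$-path activations $g_i$ be convex non-decreasing with feedforward networks that may employ arbitrary activations at their output. The resolution is that this constraint is needed only to guarantee convexity of $f$ in $y$; since the constructed $f$ in the feedforward case is constant in $y$, the constraint becomes vacuous and any $g_{k-1}$ matching $\tilde{g}_{k-1}$ may be used. For standard feedforward networks built from ReLU, max-pool, or similar convex non-decreasing activations, the issue does not arise at all. No other step requires calculation beyond substitution into the recurrence.
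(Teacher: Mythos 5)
Your FICNN embedding is correct and is essentially the paper's own argument: zero out every coupling from the $x$-path into the $z$-recurrence and set $b^{(z)}_i = b^{(y)}_i = \mathbf{1}$ so that both Hadamard gates become identities.

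The feedforward embedding, however, has a genuine gap. In this paper, ``representing'' a feedforward map $\hat f:\mathcal{X}\to\mathcal{Y}$ is meant in the energy-based sense: the PICNN is a scalar-valued energy $f(x,y;\theta)$ whose minimization over $y$ must reproduce $\hat f$'s prediction. Your construction makes $z_k$ independent of $y$, so $\argmin_y f(x,y;\theta)$ is completely uninformative (every $y$ is a minimizer) and the feedforward prediction cannot be recovered by inference; treating the constant-in-$y$ case as ``vacuously convex'' sidesteps exactly the property the proposition is about. There is also a type mismatch: $z_k$ is a scalar while a general feedforward output $u_k\in\mathbb{R}^p$ is a vector, so the identity $z_k=u_k$ cannot hold when $p>1$. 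The missing idea --- which is the real content of this half of the proposition --- is that a feedforward network is encoded as the bilinear energy $f(x,y;\theta)=\tilde f(x;\theta)^T y$, whose minimizer over the simplex (or whose entropy-regularized minimizer, giving the sigmoid) is the feedforward prediction. The paper realizes this inside the PICNN by placing the feedforward computation entirely on the $x$-path and setting the final $z$-layer to $W^{(yu)}_{k-1}=I$ and $W^{(y)}_{k-1}=1^T$ with the rest of the $y$-path zero, so that $z_k = 1^T\bigl(y\circ u_{k-1}\bigr) = u_{k-1}^T y$. Your argument needs to be repaired along these lines.
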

\begin{proof}
To recover a FICNN we simply set the weights over the entire $x$ path to be
zero and set $b^{(z)} = b^{(y)} = 1$.  We can recover a feedforward network by
noting that a traditional feedforward network $\hat{f}(x;\theta)$ where $f :
\mathcal{X} \rightarrow \mathcal{Y}$,
can be viewed as a network with an inner
product $f(x;\theta)^T y$ in its last layer
(see e.g. \cite{lecun2006tutorial} for more details).
Thus, a feedforward network can be represented as a PICNN
by setting the $x$ path to be exactly the feedforward component, then having the
$y$ path be all zero except $W_{k-1}^{(yu)} = I$ and $W^{(y)}_{k-1} = 1^T$.
\end{proof}

\section{Inference in ICNNs}

Prediction in ICNNs (which we also refer to as inference), requires
solving the convex optimization problem
\begin{equation}
\label{eq-min-2}
\minimize_{y \in \mathcal{Y}} f(x,y;\theta)
\end{equation}
While the resulting tasks are convex optimization problems (and thus
``easy'' to solve in some sense), in practice this still involves the solution
of a potentially very complex optimization problem.
We discuss here several approaches for approximately solving these optimization
problems.  We can usually obtain reasonably accurate solutions
in many settings using a procedure that only involves a small number of forward
and backward passes through the network, and which thus has a complexity that
is at most a constant factor worse than that for feedforward networks.
The same consideration will apply to training such networks, which we will
discuss in Section \ref{sec:learning}.

\paragraph{Exact inference in ICNNs}Although it is not a practical approach for
solving the optimization tasks,
the inference problem for the networks presented above (where the non-linear
are either ReLU or linear units) can be posed as as linear program.
We show how to do this in Section~\ref{sec:exact-inference}.

\subsection{Approximate inference in ICNNs}
Because of the impracticality of exact inference, we focus on approximate
approaches to optimizing over the inputs to these networks, but ideally ones
that still exploit the convexity of the resulting problem.  We
specifically focus on gradient-based approaches, which use the fact that we can
easily compute the gradient of an ICNN with respect to its inputs, $\nabla_y f
(x,y;\theta)$, using backpropagation.

\textbf{Gradient descent}. The simplest gradient-based methods for solving
\eqref{eq-min-2} is just (projected sub-) gradient descent, or modifications
such as
those that use a momentum term \citep{polyak1964some,rumelhart1988learning}, or
spectral step size modifications \citep{barzilai1988two,birgin2000nonmonotone}.
That is, we start with some initial $\hat{y}$ and repeat the update
\begin{equation}
\hat{y} \leftarrow \mathcal{P}_\mathcal{Y} \left (\hat{y} - \alpha \nabla_y f
(x,\hat{y};\theta) \right )
\end{equation}
This method is appealing in its simplicity, but suffers from the typical
problems of gradient descent on non-smooth objectives: we need to pick a
step size and possibly use a sequence of decreasing step sizes, and don't have an
obvious method to assess how accurate of a current solution we have obtained
(since an ICNN with ReLUs is piecewise linear, it will not have
zero gradient at the solution).
The method is also more challenging to integrate with some learning
procedures, as we often need to differentiate through an entire chain of the
gradient descent algorithm \cite{domke2012generic}. Thus, while the method can
sometimes
work in practice, we have found that other approaches typically far outperform
this method, and we will focus on alternative approximate approaches for the
remainder of this section.

\subsection{Approximate inference via the bundle entropy method}
An alternative approach to gradient descent is the bundle method
\citep{smola2007bundle}, also known as the epigraph cutting plane approach,
which iteratively optimizes a piecewise lower bound on the function given by the
maximum over a set of first-order approximations.  However, as, the traditional
bundle method is not well suited to our setting (we need to evaluate a number
of gradients equal to the dimension of $x$, and solve a complex optimization
problem at each step) we have developed a new optimization algorithm for
this domain that we term the \emph{bundle entropy method}.  This algorithm
specifically applies to the (common) case where $\mathcal{Y}$
is bounded, which we assume to be $\mathcal{Y} = [0,1]^n$
(other upper or lower bounds can be attained through scaling).
The method is also easily extensible to the setting where elements
of $\mathcal{Y}$ belong to a higher-dimensional probability simplex as well.

For this approach, we consider adding an additional ``barrier'' function to the
optimization in the form of the negative entropy $-H(y)$, where
\begin{equation}
H(y) = -\sum_{i=1}^n(y_i\log y_i + (1-y_i)\log(1-y_i)).
\end{equation}
In other words, we instead want to solve the optimization problem $\argmin_y
\;\; f(x,y;\theta) - H(y)$ (with a possible additional scaling term).
The negative entropy is a convex function, with the limits
of $\lim_{y\rightarrow 0} H (y)=\lim_{y\rightarrow1} H(y) = 0$, and negative
values in the interior of this range.  The function acts as a barrier because,
although it does not approach infinity as it reaches the barrier of the feasible
set, its gradient \emph{does} approach infinity as it reaches the barrier, and
thus the optimal solution will always lie in the interior of the unit hypercube
$\mathcal{Y}$.

An appealing feature of the entropy regularization comes from its close
connection with sigmoid units in typical neural networks.  It follows easily
from first-order optimality conditions that the optimization problem
\begin{equation}
\minimize_y \; c^T y - H(y)
\end{equation}
is given by $y^\star = 1/(1+\exp(c))$.  Thus if we consider the ``trivial''
PICNN mentioned in Section \ref{sec:picnn}, which simply consists of the
function $f(x,y;\theta) = y^T \tilde{f}(x;\theta)$ for some purely feedforward
network $\tilde{f}(x;\theta)$, then the entropy-regularized minimization problem
gives a solution that is equivalent to simply taking the sigmoid of the neural
network outputs.  Thus, the move to ICNNs can be interpreted as providing a
more structured joint energy functional over the linear function implicitly used
by sigmoid layers.

At each iteration of the bundle entropy method, we solve the optimization
problem
\begin{equation}
y^{k+1}, t^{k+1} :=  \argmin_{y, t} \;\; \{t - H(y) \mid G y + h \leq t1 \}
\end{equation}
where $G \in \mathbb{R}^{k \times n}$ has rows equal to
\begin{equation}
g_i^T = \nabla_y f (x, y^i;\theta)^T
\end{equation}
and $h \in \mathbb{R}^k$ has entries equal to
\begin{equation}
h_i = f(x, y^i;\theta) - \nabla_y f (x, y^i;\theta)^T y^i.
\end{equation}
The Lagrangian of the optimization problem is
\begin{equation}
\mathcal{L}(y,t,\lambda) = t - H(y) + \lambda^T(G y + h - t1)
\end{equation}
and differentiating with respect to $y$ and $t$ gives the optimality conditions
\begin{equation}
\begin{split}
\nabla_y \mathcal{L}(y,t,\lambda) = 0 & \; \Longrightarrow \; y = \frac{1}{1 +
\exp(G^T \lambda)} \\
\nabla_t \mathcal{L}(y,t,\lambda) = 0 & \; \Longrightarrow \; 1^T \lambda = 1
\end{split}
\end{equation}
which in turn leads to the dual problem
\begin{equation}
\begin{split}
\maximize_\lambda \;\; &(G1 + h)^T \lambda - 1^T\log(1 + \exp(G^T\lambda)) \\
\subjectto \;\; & \lambda \geq 0, 1^T \lambda = 1.
\end{split}
\end{equation}
This is a smooth optimization problem over the unit simplex, and can be solved using
a method like
the Projected Newton method of \citep[pg. 241, eq. 97]{bertsekas1982projected}.
A complete description of the bundle entropy
method is given in Section~\ref{sec:bundle-entropy-alg}.
For lower dimensional problems, the bundle entropy method often attains an exact
solution after a relatively small number of iterations. And even for larger
problems, we find that the approximate solutions generated by a very
small number of iterations (we typically use 5 iterations), still
substantially outperform gradient descent approaches. Further, because we
maintain an explicit lower bound on the function, we can compute an
optimality gap of our solution, though in practice just using a fixed number of
iterations performs well.


\section{Learning ICNNs}
\label{sec:learning}

Generally speaking, ICNN learning shapes the objective's energy function to
produce the desired values when optimizing over the relevant inputs.  That is,
for a given input output pair $(x,y^\star)$, our goal is to find ICNN parameters
$\theta$ such that
\begin{equation}
y^\star \approx \argmin_y \tilde{f}(x,y;\theta)
\end{equation}
where for the entirely of this section, we use the notation $\tilde{f}$ to
denote the combination of the neural network function \emph{plus} the
regularization term such as $-H(y)$, if it is included, i.e.
\begin{equation}
\tilde{f}(x,y;\theta) = f(x,y;\theta) - H(y).
\end{equation}
Although we only discuss the entropy regularization in this work, we emphasize
that other regularizers are also possible. Depending on the setting, there are
several different approaches we can use
to ensure that the ICNN achieves the desired targets, and we consider three
approaches below: direct functional fitting,
max-margin structured prediction, and argmin differentiation.

\paragraph{Direct functional fitting.}
We first note that in some domains, we do not need a specialized procedure for
fitting ICNNs, but can use existing approaches that directly fit the ICNN.
An example of this is the Q-learning setting.
Given some observed tuple $(s,a,r,s')$, Q learning updates the
parameters $\theta$ with the gradient
\begin{equation}
\label{eq:q-learning-update}
\left(Q(s,a) - r - \gamma\max_{a'} Q (s',a') \right )\nabla_\theta Q(s,a),
\end{equation}
where the maximization step is carried out with gradient descent or
the bundle entropy method.
These updates can be applied to ICNNs with the only additional requirement that
we project the weights onto their feasible sets after this update
(i.e., clip or project any $W$ terms that are required to be positive).  Section~\ref{sec:icnn-rl} gives a complete description of
deep Q-learning with ICNNs.


\paragraph{Max-margin structured prediction.}
Although max-margin structured prediction is a simple and well-studied approach
\citep{tsochantaridis2005large,taskar2005learning},
in our experiences using these methods within an ICNN, we had substantial
difficulty choosing the proper margin scaling term (especially for domains with
continuous-valued outputs), or allowing for losses other
than the hinge loss.  For this reason,
Section~\ref{sec:max-margin} discusses max-margin structured
prediction in more detail, but the majority of our experiments here
focus on the next approach, which more directly encodes the
loss suffered by the full structured-prediction pipeline.

\subsection{Argmin differentiation}

In our final proposed approach, that of argmin differentiation, we
propose to directly minimize a loss function between true outputs and the
outputs predicted by our model, where these predictions themselves are the
result of an optimization problem.  We explicitly consider the case where the
approximate solution to the inference problem is attained via the
previously-described bundle entropy method, typically run for some fixed
(usually small) number of iterations. To simplify notation, in the following we
will let
\begin{equation}
  \begin{split}
\hat{y}(x;\theta) &= \argmin_{y} \min_t \;\; \{t - H(y) \mid G y + h \leq t1 \} \\
& \approx \argmin_y \tilde{f}(x,y;\theta)
  \end{split}
\end{equation}
refer to the \emph{approximate} minimization over $y$ that
results from running the bundle entropy method, specifically at the last
iteration of the method.

Given some example $(x,y^\star)$, our goal is to compute the gradient, with
respect to the ICNN parameters, of the loss between $y^\star$ and $\hat{y}
(x;\theta)$:
$\ell(\hat{y}(x;\theta), y^\star)$.  This is in some sense the most direct
analogue to
traditional neural network learning, since we typically optimize networks by
minimizing some loss between the network's (feedforward) predictions and the true
desired labels.  Doing this in the predictions-via-optimization setting
requires that we differentiate ``through'' the argmin operator, which can be
accomplished via implicit differentiation of the KKT optimality conditions.
Although the derivation is somewhat involved, the final result is fairly
compact, and is given by the following proposition (for simplicity, we will
write $\hat{y}$ below instead of $\hat{y}(x;\theta)$ when the notation should
be clear):
\begin{proposition}
\label{proposition-gradient}
The gradient of the neural network loss for predictions generated through
the minimization process is
\begin{equation}
  \begin{split}
    & \nabla_\theta \ell(\hat{y}(x;\theta), y^\star) = \sum_{i=1}^k (c^\lambda_i
    \nabla_\theta f(x, y^i;\theta) + \\
    & \hspace{2mm} \nabla_\theta \left(\nabla_y f(x,y^i;\theta)^T
\left (\lambda_i c^y + c^\lambda_i \left (\hat{y}(x;\theta) - y^i \right )
\right
)\right) )
  \end{split}
\end{equation}
where $y^i$ denotes the solution returned by the $i$th iteration of the entropy
bundle method, $\lambda$ denotes the dual variable solution of the entropy
bundle method, and where the $c$ variables are determined by the solution to the
linear system
\begin{equation}
\left [ \begin{array}{ccc}
H & G^T & 0 \\
G & 0 & -1 \\
0 & -1^T & 0 \end{array} \right ]
\left [ \begin{array}{c} c^y \\ c^\lambda \\ c^t \end{array} \right ]
=
\left [ \begin{array}{c} -\nabla_{\hat{y}} \ell(\hat{y}, y^\star) \\ 0 \\ 0
\end{array} \right ].
\end{equation}
where $H = \diag\left(\frac{1}{\hat{y}} + \frac{1}{1-\hat{y}}\right)$.
\end{proposition}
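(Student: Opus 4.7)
The plan is to characterize $\hat{y}(x;\theta)$ implicitly as the primal solution of the last bundle-entropy subproblem, differentiate its KKT conditions, and contract the resulting Jacobian against $\nabla_{\hat{y}}\ell$ via the adjoint trick. Restricting to the active set (indices with $\lambda_i > 0$), the primal-dual optimum $(\hat{y}, \hat{t}, \lambda)$ satisfies
\begin{equation*}
-\nabla H(\hat{y}) + G^T\lambda = 0, \quad 1^T\lambda - 1 = 0, \quad G\hat{y} + h - \hat{t}\,1 = 0,
\end{equation*}
with the $i$-th row of $G$ equal to $\nabla_y f(x,y^i;\theta)^T$ and $h_i = f(x,y^i;\theta) - \nabla_y f(x,y^i;\theta)^T y^i$. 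Hence $\theta$ enters only through $G$ and $h$; the iterates $y^1,\ldots,y^k$ are treated as fixed evaluation points, as is standard when implicitly differentiating a fixed-point procedure.

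Differentiating the three equations and noting that $-\nabla^2 H(\hat{y}) = \diag(1/\hat{y}+1/(1-\hat{y}))$ is exactly the matrix $H$ named in the proposition yields the block-linear system
\begin{equation*}
\begin{bmatrix} H & G^T & 0 \\ G & 0 & -1 \\ 0 & -1^T & 0 \end{bmatrix}
\begin{bmatrix} d\hat{y} \\ d\lambda \\ d\hat{t} \end{bmatrix}
=
\begin{bmatrix} -(dG)^T\lambda \\ -(dG)\hat{y} - dh \\ 0 \end{bmatrix}.
\end{equation*}
The quantity to extract is $\nabla_{\hat{y}}\ell^T\, d\hat{y}$. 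Because the KKT matrix is symmetric, the adjoint trick says that the solution $(c^y, c^\lambda, c^t)$ of the same system with right-hand side $(-\nabla_{\hat{y}}\ell, 0, 0)^T$ — precisely the system stated in the proposition — gives $\nabla_\theta \ell = (c^y)^T(dG)^T\lambda + (c^\lambda)^T(dG)\hat{y} + (c^\lambda)^T dh$ after taking the inner product of $(c^y,c^\lambda,c^t)$ with the right-hand side and tracking signs.

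The final step is to expand $dG$ and $dh$ via the chain rule in $\theta$ and collect. Each row contributes $d G_{i\cdot} = \nabla_\theta \nabla_y f(x,y^i;\theta)\, d\theta$, and $dh_i = \nabla_\theta f(x,y^i;\theta)\, d\theta - (y^i)^T \nabla_\theta\nabla_y f(x,y^i;\theta)\, d\theta$. The three scalar multipliers attached to $\nabla_\theta\nabla_y f(x,y^i;\theta)$ are $\lambda_i c^y$ (from $(c^y)^T(dG)^T\lambda$), $c^\lambda_i \hat{y}$ (from $(c^\lambda)^T(dG)\hat{y}$), and $-c^\lambda_i y^i$ (from the $-\nabla_y f^T y^i$ portion of $h_i$); these aggregate into $\lambda_i c^y + c^\lambda_i(\hat{y}-y^i)$, while the remaining piece of $dh$ contributes $c^\lambda_i \nabla_\theta f(x,y^i;\theta)$, reproducing the stated formula.

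I expect the main obstacle to be the bookkeeping in this last step: the same first-order data $\nabla_y f(x,y^i;\theta)$ appears in both $G$ and $h$, so three separate contributions must be aggregated with consistent signs to collapse to the compact coefficient $\lambda_i c^y + c^\lambda_i(\hat{y}-y^i)$. A secondary technicality is justifying the implicit step itself — one must restrict to the active set and assume strict complementarity plus linear independence of active gradient rows so the KKT matrix is invertible; strong convexity of $-H$ on $(0,1)^n$ supplies the required positive-definiteness of the $H$ block.
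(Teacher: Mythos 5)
Your proposal is correct and follows essentially the same route as the paper's proof: implicit differentiation of the active-set KKT conditions of the final bundle-entropy subproblem, with the adjoint/left-multiplication trick (solving the symmetric KKT system once with right-hand side $(-\nabla_{\hat{y}}\ell,0,0)$) to avoid forming the Jacobians $\partial\hat{y}/\partial G$ and $\partial\hat{y}/\partial h$ explicitly, followed by expanding $\dd G$ and $\dd h$ through $\nabla_y f(x,y^i;\theta)$ and $h_i$ and collecting terms into $\lambda_i c^y + c^\lambda_i(\hat{y}-y^i)$. The only cosmetic difference is that you combine the $G$- and $h$-differentials into a single right-hand side where the paper treats them separately, and you additionally note the regularity conditions (active-set restriction, invertibility of the KKT matrix) that the paper leaves implicit.
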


The proof of this proposition is given in Section~\ref{sec:argmin-diff-proof},
but we highlight a few key points here.
The complexity of computing this gradient will be linear in $k$, which is the
number of \emph{active} constraints at the solution of the bundle entropy
method.  The inverse of this matrix can also be computed efficiently by just
inverting the $k
\times k$ matrix $G H^{-1} G^T$ via
a variable elimination procedure, instead of by inverting the full matrix.
The gradients $\nabla_\theta f(x,y_i;\theta)$ are standard neural network
gradients, and further, can be computed in the same forward/backward pass as we
use to compute the gradients for the bundle entropy method.
The main challenge of the method is to compute the terms of the form
$\nabla_\theta (\nabla_y f(x,y_i;\theta)^T v)$
for some vector $v$.  This quantity can be computed by most autodifferentiation
tools (the gradient inner product $\nabla_y f(x,y_i;\theta)^T v$ itself just
becomes a graph computation than can be differentiated itself), or it can be
computed by a finite difference approximation.
The complexity of computing this entire gradient is a small
constant multiple of computing $k$ gradients with respect to $\theta$.

Given this ability to compute gradients with respect to an arbitrary loss
function, we can fit the parameter using traditional stochastic gradient methods
examples.  Specifically, given an example (or a minibatch of examples) $x_i,
y_i$, we compute gradients $\nabla_\theta \ell(\hat{y}(x_i;\theta), y_i)$
and update the parameters using e.g. the ADAM optimizer \citep{kingma2014adam}.


\section{Experiments}

Our experiments study the representational power of ICNNs to
better understand the interplay between the model's
restrictiveness and accuracy.  Specifically, we evaluate the method on
multi-label classification on the BibTeX dataset \citep{katakis2008multilabel},
image completion using the Olivetti face dataset \citep{samaria1994parameterisation},
and continuous action reinforcement learning in the
OpenAI Gym \citep{brockman2016openai}. We show that the methods compare
favorably to the state of the art in many situations.
The full source code for all experiments is available in the
\verb!icml2017! branch at \url{https://github.com/locuslab/icnn} and
our implementation is built using
Python \citep{van1995python} with the numpy \citep{oliphant2006guide} and
TensorFlow \citep{abadi2016tensorflow} packages.


\subsection{Synthetic 2D example}  Though we do not discuss it here,
Section~\ref{sec:synthetic} presents a simple synthetic classification
experiment comparing FICNN and PICNN decision boundaries.

\subsection{Multi-Label Classification}
We first study how ICNNs perform on multi-label classification with the
BibTeX dataset and benchmark presented in \cite{katakis2008multilabel}.
This benchmark maps text classification from an input space $\cal{X}$ of
1836 bag-of-works indicator (binary) features to an output
space $\cal{Y}$ of 159 binary labels.
We use the train/test split of 4880/2515 from \citep{katakis2008multilabel}
and evaluate with the macro-F1 score (higher is better).
We use the ARFF version of this dataset from Mulan \citep{tsoumakas2011mulan}.
Our PICNN architecture for multi-label classification uses fully-connected
layers with ReLU activation functions and batch
normalization \citep{ioffe2015batch} along the input path.
As a baseline, we use a fully-connected neural network with
batch normalization and ReLU activation functions.
Both architectures have the same structure
(600 fully connected, 159 (\#labels) fully connected).
We optimize our PICNN with 30 iterations of gradient descent
with a learning rate of 0.1 and a momentum of 0.3.

\begin{table}
\begin{center}
\begin{tabular}{@{}ll@{}}
Method & Test Macro-F1 \\ \hline
Feedforward net & 0.396 \\
ICNN & 0.415 \\
SPEN \citep{belanger2016structured} & \textbf{0.422} \\
\end{tabular}
\caption{Comparison of approaches on BibTeX multi-label classification task.
(Higher is better.)}
\label{table:bibtex}
\end{center}
\end{table}
Table \ref{table:bibtex} compares several different methods for this problem.
Our PICNN's final macro-F1 score of 0.415 outperforms our
baseline feedforward network's score of 0.396,
which indicates PICNNs have the power to learn a robust
structure over the output space.
SPENs obtain a macro-F1 score of 0.422 on this task \citep{belanger2016structured}
and pose an interesting comparison point to ICNNs as they have
a similar (but not identical) deep structure that is non-convex
over the input space.
The difference of 0.007 between ICNNs and SPENs could be due
to differences in our experimental setups, architectures,
and random experimental noise.
More details are included in Section~\ref{sec:exp:ml:f1}.

\subsection{Image completion on the Olivetti faces}
As a test of the system on a structured prediction task over a
much more complex output space $\cal{Y}$, we apply a
convolutional PICNN to face completion on the
sklearn version \citep{pedregosa2011scikit} of the Olivetti
data set \citep{samaria1994parameterisation}, which contains
400 64x64 grayscale images.
ICNNs for face completion should be invariant to translations
and other transformations in the input space.
To achieve this invariance, our PICNN is inspired by the
DQN architecture in \citet{mnih2015human}, which preserves
this invariance in the different context of reinforcement learning.
Specifically, our network
is over $(x,y)$ pairs where
$x$ (32x64) is the left half and $y$ (32x64)
is the right half of the image.
The input and output paths are:
32x8x8 conv (stride 4x2), 64x4x4 conv (stride 2x2),
64x3x3 conv, 512 fully connected.

This experiment uses the same training/test splits and minimizes
the mean squared error (MSE) as in \citet{poon2011sum} so that our
results can be directly compared to (a non-exhaustive list of)
other techniques.
We also explore the tradeoffs between the bundle entropy method
and gradient descent and use a non-convex baseline to
better understand the impacts of convexity.
We use a learning rate of 0.01 and momentum of 0.9 with
gradient descent for the inner optimization in the ICNN.

\begin{figure}[t]
  \centering
  \includegraphics[width=0.35\textwidth]{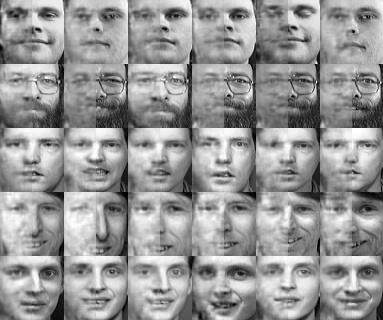}
  \caption{Example test set image completions of the ICNN with bundle entropy.}
  \label{fig:images-completed}
\end{figure}

Table~\ref{table:image} shows the test MSEs for the different approaches.
Example image completions are shown in Figure~\ref{fig:images-completed}.
These results show that the bundle entropy method can leverage
more information from these five iterations than gradient descent,
even when the convexity constraint is relaxed.
The PICNN trained with back-optimization with the relaxed convexity
constraint slightly outperforms the network with
the convexity constraint, but not the network trained with the
bundle-entropy method.
This shows that for image completion with PICNNs, convexity does not seem to
inhibit the representational power.
Furthermore, this experiment suggests that a small number of inner optimization iterations
(five in this case) is sufficient for good performance.
\begin{table}
\begin{center}
\begin{tabular}{@{}ll@{}}
Method & MSE \\ \hline
ICNN - Bundle Entropy & \textbf{833.0} \\
ICNN - Gradient Decent & 872.0 \\
ICNN - Nonconvex & 850.9 \\
Sum-product \citep{poon2011sum} & 942
\end{tabular}
\caption{Comparisons of reconstruction error on image completion.}
\label{table:image}
\end{center}
\end{table}

\subsection{Continuous Action Reinforcement Learning}
Finally, we present standard benchmarks in continuous action reinforcement
learning from the OpenAI Gym \citep{brockman2016openai} that use the
MuJoCo physics simulator \citep{todorov2012mujoco}.
We model the (negative) $Q$ function,
$-Q(s,a;\theta)$ as an ICNN and select actions with
the convex optimization problem
$a^\star(s) = \argmin_a -Q(s,a;\theta)$.
We use Q-learning to optimize the ICNN as described in
Section~\ref{sec:learning} and Section~\ref{sec:icnn-rl}.
At test time, the policy is selected by optimizing $Q(s, a; \theta)$.
All of our experiments use a PICNN with two fully-connected
layers that each have 200 hidden units.
We compare to Deep Deterministic Policy Gradient (DDPG) \citep{lillicrap2015continuous}
and Normalized Advantage Functions (NAF) \citep{gu2016continuous}
as state-of-the-art off-policy learning baselines.\footnote{Because there are
not official DDPG or NAF implementations or
results on the OpenAI gym tasks, we use the Simon Ramstedt's
DDPG implementation from \url{https://github.com/SimonRamstedt/ddpg}
and have re-implemented NAF.}

\begin{table}
\begin{center}
\begin{tabular}[c]{@{}llll@{}}
Task & DDPG & NAF & ICNN \\ \hline
Ant & 1000.00 & 999.03 & \textbf{1056.29} \\
HalfCheetah & 2909.77 & 2575.16 & \textbf{3822.99} \\
Hopper & \textbf{1501.33} & 1100.43 & 831.00 \\
Humanoid & 524.09 & \textbf{5000.68} & 433.38 \\
HumanoidStandup & 134265.96 & 116399.05 & \textbf{141217.38} \\
InvDoubPend & \textbf{9358.81} & \textbf{9359.59} & \textbf{9359.41} \\
InvPend & \textbf{1000.00} & \textbf{1000.00} & \textbf{1000.00} \\
Reacher & -6.10 & -6.31 & \textbf{-5.08} \\
Swimmer & 49.79 & \textbf{69.71} & 64.89 \\
Walker2d & \textbf{1604.18} & 1007.25 & 298.21 \\
\end{tabular}

\caption{Maximum test reward for ICNN algorithm versus alternatives on several
OpenAI Gym tasks. (All tasks are v1.)}
\label{tab:rl:maxTestRew}
\end{center}
\end{table}

Table~\ref{tab:rl:maxTestRew} shows the maximum test reward achieved by the
different algorithms on these tasks.  Although no method strictly dominates the
others, the ICNN approach has some clear advantages on tasks like HalfCheetah,
Reacher, and HumanoidStandup, and performs comparably on many other tasks,
though with also a few notable poor performances in Hopper and Walker2D.
Nonetheless, given the strong baseline, and the fact that the method is
literally just a drop-in replacement for a function approximator in Q-learning,
these results are overall positive.
NAF poses a particularly interesting comparison point to ICNNs.  In particular,
NAF decomposes the $Q$ function in terms of the value function an an advantage
function $Q(s,a) = V(s) + A(s,a)$ where the advantage function is restricted to
be \emph{concave quadratic} in the actions, and thus always has a closed-form
solution.  In a sense, this closely mirrors the setup of the PICNN architecture:
like NAF, we have a separate non-convex path for the $s$ variables, and an
overall function that is convex in $a$; however, the distinction is that while
NAF requires that the convex portion be quadratic, the ICNN architecture allows
any convex functional form.  As our experiments show, this representational
power does allow for better performance of the resulting system, though the
trade-off, of course, is that determining the optimal action in an ICNN is
substantially more computationally complex than for a quadratic.

\section{Conclusion and future work}

This paper laid the groundwork for the input convex neural network model.  By
incorporating relatively simple constraints into existing network architectures,
we can fit very general convex functions and the apply optimization as an
inference procedure.  Since many existing models already fit into this overall
framework (e.g., CRF models perform an optimization over an output space where
parameters are given by the output of a neural network), the proposed method
presents an extension where the entire inference procedure is ``learned'' along
with the network itself, without the need for explicitly building typical
structured prediction architectures.  This work explored only a small subset of
the possible applications of these network, and the networks offer promising
directions for many additional domains.

\newpage
\section*{Acknowledgments}
BA is supported by the National Science Foundation Graduate Research Fellowship
Program under Grant No. DGE1252522.
We also thank David Belanger for helpful discussions.

\bibliography{icnn}

\begin{thebibliography}{45}
\providecommand{\natexlab}[1]{#1}
\providecommand{\url}[1]{\texttt{#1}}
\expandafter\ifx\csname urlstyle\endcsname\relax
  \providecommand{\doi}[1]{doi: #1}\else
  \providecommand{\doi}{doi: \begingroup \urlstyle{rm}\Url}\fi

\bibitem[Abadi et~al.(2016)Abadi, Agarwal, Barham, Brevdo, Chen, Citro,
  Corrado, Davis, Dean, Devin, et~al.]{abadi2016tensorflow}
Abadi, Mart{\i}n, Agarwal, Ashish, Barham, Paul, Brevdo, Eugene, Chen, Zhifeng,
  Citro, Craig, Corrado, Greg~S, Davis, Andy, Dean, Jeffrey, Devin, Matthieu,
  et~al.
\newblock Tensorflow: Large-scale machine learning on heterogeneous distributed
  systems.
\newblock \emph{arXiv preprint arXiv:1603.04467}, 2016.

\bibitem[Barzilai \& Borwein(1988)Barzilai and Borwein]{barzilai1988two}
Barzilai, Jonathan and Borwein, Jonathan~M.
\newblock Two-point step size gradient methods.
\newblock \emph{IMA Journal of Numerical Analysis}, 8\penalty0 (1):\penalty0
  141--148, 1988.

\bibitem[Belanger \& McCallum(2016)Belanger and
  McCallum]{belanger2016structured}
Belanger, David and McCallum, Andrew.
\newblock Structured prediction energy networks.
\newblock In \emph{Proceedings of the International Conference on Machine
  Learning}, 2016.

\bibitem[Bengio et~al.(1994)Bengio, LeCun, and
  Henderson]{bengio-lecun-henderson-94}
Bengio, Yoshua, LeCun, Yann, and Henderson, Donnie.
\newblock Globally trained handwritten word recognizer using spatial
  representation, convolutional neural networks, and hidden markov models.
\newblock \emph{Advances in neural information processing systems}, pp.\
  937--937, 1994.

\bibitem[Bertsekas(1982)]{bertsekas1982projected}
Bertsekas, Dimitri~P.
\newblock Projected newton methods for optimization problems with simple
  constraints.
\newblock \emph{SIAM Journal on control and Optimization}, 20\penalty0
  (2):\penalty0 221--246, 1982.

\bibitem[Birgin et~al.(2000)Birgin, Mart{\'\i}nez, and
  Raydan]{birgin2000nonmonotone}
Birgin, Ernesto~G, Mart{\'\i}nez, Jos{\'e}~Mario, and Raydan, Marcos.
\newblock Nonmonotone spectral projected gradient methods on convex sets.
\newblock \emph{SIAM Journal on Optimization}, 10\penalty0 (4):\penalty0
  1196--1211, 2000.

\bibitem[Boyd \& Vandenberghe(2004)Boyd and Vandenberghe]{boyd2004convex}
Boyd, Stephen and Vandenberghe, Lieven.
\newblock \emph{Convex optimization}.
\newblock Cambridge university press, 2004.

\bibitem[Boyd et~al.(2011)Boyd, Parikh, Chu, Peleato, and
  Eckstein]{boyd2011distributed}
Boyd, Stephen, Parikh, Neal, Chu, Eric, Peleato, Borja, and Eckstein, Jonathan.
\newblock Distributed optimization and statistical learning via the alternating
  direction method of multipliers.
\newblock \emph{Foundations and Trends{\textregistered} in Machine Learning},
  3\penalty0 (1):\penalty0 1--122, 2011.

\bibitem[Brockman et~al.(2016)Brockman, Cheung, Pettersson, Schneider,
  Schulman, Tang, and Zaremba]{brockman2016openai}
Brockman, Greg, Cheung, Vicki, Pettersson, Ludwig, Schneider, Jonas, Schulman,
  John, Tang, Jie, and Zaremba, Wojciech.
\newblock Openai gym.
\newblock \emph{arXiv preprint arXiv:1606.01540}, 2016.

\bibitem[Chen et~al.(2015)Chen, Schwing, Yuille, and Urtasun]{chen2015learning}
Chen, Liang-Chieh, Schwing, Alexander~G, Yuille, Alan~L, and Urtasun, Raquel.
\newblock Learning deep structured models.
\newblock In \emph{Proceedings of the International Conference on Machine
  Learning}, 2015.

\bibitem[Domke(2012)]{domke2012generic}
Domke, Justin.
\newblock Generic methods for optimization-based modeling.
\newblock In \emph{Proceedings of the Conference on {AI} and Statistics}, pp.\
  318--326, 2012.

\bibitem[Duchi et~al.(2011)Duchi, Hazan, and Singer]{duchi2011adaptive}
Duchi, John, Hazan, Elad, and Singer, Yoram.
\newblock Adaptive subgradient methods for online learning and stochastic
  optimization.
\newblock \emph{The Journal of Machine Learning Research}, 12:\penalty0
  2121--2159, 2011.

\bibitem[Goodfellow et~al.(2014)Goodfellow, Pouget-Abadie, Mirza, Xu,
  Warde-Farley, Ozair, Courville, and Bengio]{goodfellow2014generative}
Goodfellow, Ian, Pouget-Abadie, Jean, Mirza, Mehdi, Xu, Bing, Warde-Farley,
  David, Ozair, Sherjil, Courville, Aaron, and Bengio, Yoshua.
\newblock Generative adversarial nets.
\newblock In \emph{Advances in Neural Information Processing Systems}, pp.\
  2672--2680, 2014.

\bibitem[Gu et~al.(2016)Gu, Lillicrap, Sutskever, and Levine]{gu2016continuous}
Gu, Shixiang, Lillicrap, Timothy, Sutskever, Ilya, and Levine, Sergey.
\newblock Continuous deep q-learning with model-based acceleration.
\newblock In \emph{Proceedings of the International Conference on Machine
  Learning}, 2016.

\bibitem[He et~al.(2015)He, Zhang, Ren, and Sun]{he2015deep}
He, Kaiming, Zhang, Xiangyu, Ren, Shaoqing, and Sun, Jian.
\newblock Deep residual learning for image recognition.
\newblock \emph{arXiv preprint arXiv:1512.03385}, 2015.

\bibitem[Huang et~al.(2016)Huang, Liu, and Weinberger]{huang2016densely}
Huang, Gao, Liu, Zhuang, and Weinberger, Kilian~Q.
\newblock Densely connected convolutional networks.
\newblock \emph{arXiv preprint arXiv:1608.06993}, 2016.

\bibitem[Ioffe \& Szegedy(2015)Ioffe and Szegedy]{ioffe2015batch}
Ioffe, Sergey and Szegedy, Christian.
\newblock Batch normalization: Accelerating deep network training by reducing
  internal covariate shift.
\newblock In \emph{Proceedings of The 32nd International Conference on Machine
  Learning}, pp.\  448--456, 2015.

\bibitem[Katakis et~al.(2008)Katakis, Tsoumakas, and
  Vlahavas]{katakis2008multilabel}
Katakis, Ioannis, Tsoumakas, Grigorios, and Vlahavas, Ioannis.
\newblock Multilabel text classification for automated tag suggestion.
\newblock \emph{ECML PKDD discovery challenge}, 75, 2008.

\bibitem[Kingma \& Ba(2014)Kingma and Ba]{kingma2014adam}
Kingma, Diederik and Ba, Jimmy.
\newblock Adam: A method for stochastic optimization.
\newblock \emph{arXiv preprint arXiv:1412.6980}, 2014.

\bibitem[Koller \& Friedman(2009)Koller and Friedman]{koller2009probabilistic}
Koller, Daphne and Friedman, Nir.
\newblock \emph{Probabilistic graphical models: principles and techniques}.
\newblock MIT press, 2009.

\bibitem[Krizhevsky et~al.(2012)Krizhevsky, Sutskever, and
  Hinton]{krizhevsky2012imagenet}
Krizhevsky, Alex, Sutskever, Ilya, and Hinton, Geoffrey~E.
\newblock Imagenet classification with deep convolutional neural networks.
\newblock In \emph{Advances in neural information processing systems}, pp.\
  1097--1105, 2012.

\bibitem[LeCun et~al.(2006)LeCun, Chopra, Hadsell, Ranzato, and
  Huang]{lecun2006tutorial}
LeCun, Yann, Chopra, Sumit, Hadsell, Raia, Ranzato, M, and Huang, F.
\newblock A tutorial on energy-based learning.
\newblock \emph{Predicting structured data}, 1:\penalty0 0, 2006.

\bibitem[Lillicrap et~al.(2015)Lillicrap, Hunt, Pritzel, Heess, Erez, Tassa,
  Silver, and Wierstra]{lillicrap2015continuous}
Lillicrap, Timothy~P, Hunt, Jonathan~J, Pritzel, Alexander, Heess, Nicolas,
  Erez, Tom, Tassa, Yuval, Silver, David, and Wierstra, Daan.
\newblock Continuous control with deep reinforcement learning.
\newblock \emph{arXiv preprint arXiv:1509.02971}, 2015.

\bibitem[Magnani \& Boyd(2009)Magnani and Boyd]{magnani2009convex}
Magnani, Alessandro and Boyd, Stephen~P.
\newblock Convex piecewise-linear fitting.
\newblock \emph{Optimization and Engineering}, 10\penalty0 (1):\penalty0 1--17,
  2009.

\bibitem[Mnih et~al.(2015)Mnih, Kavukcuoglu, Silver, Rusu, Veness, Bellemare,
  Graves, Riedmiller, Fidjeland, Ostrovski, et~al.]{mnih2015human}
Mnih, Volodymyr, Kavukcuoglu, Koray, Silver, David, Rusu, Andrei~A, Veness,
  Joel, Bellemare, Marc~G, Graves, Alex, Riedmiller, Martin, Fidjeland,
  Andreas~K, Ostrovski, Georg, et~al.
\newblock Human-level control through deep reinforcement learning.
\newblock \emph{Nature}, 518\penalty0 (7540):\penalty0 529--533, 2015.

\bibitem[Nair \& Hinton(2010)Nair and Hinton]{nair2010rectified}
Nair, Vinod and Hinton, Geoffrey~E.
\newblock Rectified linear units improve restricted boltzmann machines.
\newblock In \emph{Proceedings of the 27th International Conference on Machine
  Learning (ICML-10)}, pp.\  807--814, 2010.

\bibitem[Oliphant(2006)]{oliphant2006guide}
Oliphant, Travis~E.
\newblock \emph{A guide to NumPy}, volume~1.
\newblock Trelgol Publishing USA, 2006.

\bibitem[Pedregosa et~al.(2011)Pedregosa, Varoquaux, Gramfort, Michel, Thirion,
  Grisel, Blondel, Prettenhofer, Weiss, Dubourg, et~al.]{pedregosa2011scikit}
Pedregosa, Fabian, Varoquaux, Ga{\"e}l, Gramfort, Alexandre, Michel, Vincent,
  Thirion, Bertrand, Grisel, Olivier, Blondel, Mathieu, Prettenhofer, Peter,
  Weiss, Ron, Dubourg, Vincent, et~al.
\newblock Scikit-learn: Machine learning in python.
\newblock \emph{The Journal of Machine Learning Research}, 12:\penalty0
  2825--2830, 2011.

\bibitem[Peng et~al.(2009)Peng, Bo, and Xu]{peng2009conditional}
Peng, Jian, Bo, Liefeng, and Xu, Jinbo.
\newblock Conditional neural fields.
\newblock In \emph{Advances in neural information processing systems}, pp.\
  1419--1427, 2009.

\bibitem[Polyak(1964)]{polyak1964some}
Polyak, Boris~T.
\newblock Some methods of speeding up the convergence of iteration methods.
\newblock \emph{USSR Computational Mathematics and Mathematical Physics},
  4\penalty0 (5):\penalty0 1--17, 1964.

\bibitem[Poon \& Domingos(2011)Poon and Domingos]{poon2011sum}
Poon, Hoifung and Domingos, Pedro.
\newblock Sum-product networks: A new deep architecture.
\newblock In \emph{{UAI} 2011, Proceedings of the Twenty-Seventh Conference on
  Uncertainty in Artificial Intelligence, Barcelona, Spain, July 14-17, 2011},
  pp.\  337--346, 2011.

\bibitem[Ratliff et~al.(2007)Ratliff, Bagnell, and
  Zinkevich]{ratliff2007approximate}
Ratliff, Nathan~D, Bagnell, J~Andrew, and Zinkevich, Martin.
\newblock ({A}pproximate) subgradient methods for structured prediction.
\newblock In \emph{International Conference on Artificial Intelligence and
  Statistics}, pp.\  380--387, 2007.

\bibitem[Rumelhart et~al.(1988)Rumelhart, Hinton, and
  Williams]{rumelhart1988learning}
Rumelhart, David~E, Hinton, Geoffrey~E, and Williams, Ronald~J.
\newblock Learning representations by back-propagating errors.
\newblock \emph{Cognitive modeling}, 5\penalty0 (3):\penalty0 1, 1988.

\bibitem[Samaria \& Harter(1994)Samaria and
  Harter]{samaria1994parameterisation}
Samaria, Ferdinando~S and Harter, Andy~C.
\newblock Parameterisation of a stochastic model for human face identification.
\newblock In \emph{Applications of Computer Vision, 1994., Proceedings of the
  Second IEEE Workshop on}, pp.\  138--142. IEEE, 1994.

\bibitem[Simard \& LeCun(1991)Simard and LeCun]{simard1991reverse}
Simard, Patrice and LeCun, Yann.
\newblock Reverse tdnn: an architecture for trajectory generation.
\newblock In \emph{Advances in Neural Information Processing Systems}, pp.\
  579--588. Citeseer, 1991.

\bibitem[Simonyan \& Zisserman(2014)Simonyan and Zisserman]{simonyan2014very}
Simonyan, Karen and Zisserman, Andrew.
\newblock Very deep convolutional networks for large-scale image recognition.
\newblock \emph{arXiv preprint arXiv:1409.1556}, 2014.

\bibitem[Smola et~al.(2008)Smola, Vishwanathan, and Le]{smola2007bundle}
Smola, Alex~J., Vishwanathan, S.v.n., and Le, Quoc~V.
\newblock Bundle methods for machine learning.
\newblock In Platt, J.~C., Koller, D., Singer, Y., and Roweis, S.~T. (eds.),
  \emph{Advances in Neural Information Processing Systems 20}, pp.\
  1377--1384. Curran Associates, Inc., 2008.

\bibitem[Szegedy et~al.(2015)Szegedy, Liu, Jia, Sermanet, Reed, Anguelov,
  Erhan, Vanhoucke, and Rabinovich]{szegedy2015going}
Szegedy, Christian, Liu, Wei, Jia, Yangqing, Sermanet, Pierre, Reed, Scott,
  Anguelov, Dragomir, Erhan, Dumitru, Vanhoucke, Vincent, and Rabinovich,
  Andrew.
\newblock Going deeper with convolutions.
\newblock In \emph{Proceedings of the IEEE Conference on Computer Vision and
  Pattern Recognition}, pp.\  1--9, 2015.

\bibitem[Taskar et~al.(2005)Taskar, Chatalbashev, Koller, and
  Guestrin]{taskar2005learning}
Taskar, Ben, Chatalbashev, Vassil, Koller, Daphne, and Guestrin, Carlos.
\newblock Learning structured prediction models: A large margin approach.
\newblock In \emph{Proceedings of the 22nd International Conference on Machine
  Learning}, pp.\  896--903. ACM, 2005.

\bibitem[Todorov et~al.(2012)Todorov, Erez, and Tassa]{todorov2012mujoco}
Todorov, Emanuel, Erez, Tom, and Tassa, Yuval.
\newblock Mujoco: A physics engine for model-based control.
\newblock In \emph{2012 IEEE/RSJ International Conference on Intelligent Robots
  and Systems}, pp.\  5026--5033. IEEE, 2012.

\bibitem[Tsochantaridis et~al.(2005)Tsochantaridis, Joachims, Hofmann, and
  Altun]{tsochantaridis2005large}
Tsochantaridis, Ioannis, Joachims, Thorsten, Hofmann, Thomas, and Altun,
  Yasemin.
\newblock Large margin methods for structured and interdependent output
  variables.
\newblock \emph{Journal of Machine Learning Research}, 6:\penalty0 1453--1484,
  2005.

\bibitem[Tsoumakas et~al.(2011)Tsoumakas, Spyromitros-Xioufis, Vilcek, and
  Vlahavas]{tsoumakas2011mulan}
Tsoumakas, Grigorios, Spyromitros-Xioufis, Eleftherios, Vilcek, Jozef, and
  Vlahavas, Ioannis.
\newblock Mulan: A java library for multi-label learning.
\newblock \emph{Journal of Machine Learning Research}, 12\penalty0
  (Jul):\penalty0 2411--2414, 2011.

\bibitem[Van~Rossum \& Drake~Jr(1995)Van~Rossum and Drake~Jr]{van1995python}
Van~Rossum, Guido and Drake~Jr, Fred~L.
\newblock \emph{Python reference manual}.
\newblock Centrum voor Wiskunde en Informatica Amsterdam, 1995.

\bibitem[Wright(1997)]{wright1997primal}
Wright, Stephen~J.
\newblock \emph{Primal-dual interior-point methods}.
\newblock Siam, 1997.

\bibitem[Zheng et~al.(2015)Zheng, Jayasumana, Romera-Paredes, Vineet, Su, Du,
  Huang, and Torr]{zheng2015conditional}
Zheng, Shuai, Jayasumana, Sadeep, Romera-Paredes, Bernardino, Vineet, Vibhav,
  Su, Zhizhong, Du, Dalong, Huang, Chang, and Torr, Philip~HS.
\newblock Conditional random fields as recurrent neural networks.
\newblock In \emph{Proceedings of the IEEE International Conference on Computer
  Vision}, pp.\  1529--1537, 2015.

\end{thebibliography}
\bibliographystyle{icml2017}

\newpage
\appendix
\twocolumn[
  \icmltitle{Input Convex Neural Networks: Supplementary Material}
  \begin{icmlauthorlist}
  \icmlauthor{Brandon Amos}{}
  \icmlauthor{Lei Xu}{}
  \icmlauthor{J.~Zico Kolter}{}
  \end{icmlauthorlist}
  \vskip 0.3in
]
\icmltitlerunning{Input Convex Neural Networks: Supplementary Material}
\section{Additional architectures}
\label{sec:additional-arch}

\subsection{Convolutional architectures}

Convolutions are important to many visual structured tasks.
We have left convolutions out to keep the prior ICNN notation
light by using matrix-vector operations.
ICNNs can be similarly created with convolutions because
the convolution is a linear operator.

The construction of convolutional layers in ICNNs depends
on the type of input and output space.
If the input and output space are similarly
structured (e.g. both spatial), the $j$th feature map
of a convolutional PICNN layer $i$ can be defined by
\begin{equation}
\begin{split}
z_{i+1}^j & = g_i\left(z_i\ast W_{i,j}^{(z)} + (Sx)\ast W_{i,j}^{(x)} + (Sy)\ast
W_{i,j}^{(y)} + b_{i,j} \right) \\
\end{split}
\end{equation}
where the convolution kernels $W$ are the same size and
$S$ scales the input and output to be the same size as
the previous feature map, and were we omit some of the Hadamard product terms
that can appear above for simplicity of presentation.

If the input space is spatial, but the output space has another structure
(e.g. the simplex), the convolution over the output space can
be replaced by a matrix-vector operation, such as

\begin{equation}
\begin{split}
z_{i+1}^j & = g_i\left(z_i\ast W_{i,j}^{(z)} + (Sx)\ast W_{i,j}^{(x)} + B_{i,j}^{(y)}y + b_{i,j} \right) \\
\end{split}
\end{equation}
where the product $B_{i,j}^{(y)}y$ is a scalar.

\section{Exact inference in ICNNs}
\label{sec:exact-inference}
Although it is not a practical approach for solving the optimization tasks, we
first highlight the fact that the inference problem for the networks
presented above (where the non-linear are either ReLU or linear units) can be
posed as as linear program.  Specifically, considering the FICNN network in
\eqref{eq-ficnn} can be written as the optimization problem
\begin{equation}
\begin{split}
\minimize_{y,z_1,\ldots,z_k} \;\; & z_k \\
 \subjectto \;\; & z_{i+1} \geq W^{(z)}_i z_i + W^{(y)}_i y + b_i,  \;\;
 i=0,\ldots,k-1 \\
& z_i \geq 0, \;\; i=1,\ldots,k-1.
\end{split}
\end{equation}
This problem exactly replicates the equations of the FICNN, with the exception
that we have replaced ReLU and the equality constraint between layers with a
positivity constraint on the $z_i$ terms and an inequality.  However, because we
are minimizing the final $z_k$ term, and because each inequality constraint is
convex, at the solution one of these constraints must be tight, i.e., $(z_i)_j
= (W^{(z)}_i z_i + W^{(y)}_i y + b_i)_j$ or $(z_i)_j = 0$, which recovers the
ReLU non-linearity exactly.  The exact same procedure can be used to write to
create an exact inference procedure for the PICNN.

Although the LP formulation is appealing in its simplicity, in practice these
optimization problems will have a number of variables equal to the \emph{total}
number of activations in the entire network.  Furthermore, most LP solution
methods to solve such problems require that we form \emph{and invert}
structured matrices with blocks such as $W_i^T W_i$
--- the case for most interior-point methods \citep{wright1997primal} or
even approximate algorithms such as the
alternating direction method of multipliers \citep{boyd2011distributed} ---
which are large dense matrices or have structured forms such as
non-cyclic convolutions that are expensive to invert.
Even incremental approaches like the Simplex method
require that we form inverses of subsets of columns of these matrices, which are
additionally different for structured operations like convolutions, and which
overall still involve substantially more computation than a single forward pass.
Furthermore, such solvers typically do not exploit the substantial effort that
has gone in to accelerating the forward and backward computation passes for
neural networks using hardware such as GPUs.  Thus, as a whole, these do not
present a viable option for optimizing the networks.

\newpage
\section{The bundle method for approximate inference in ICNNs}
\label{sec:bundle}
We here review the basic bundle method \citep{smola2007bundle} that we build
upon in our bundle entropy method.
The bundle method takes
advantage of the fact that for a convex objective, the first-order approximation
at any point is a global \emph{underestimator} of the function; this lets us
maintain a piecewise linear lower bound on the function by adding
cutting planes formed by this first order approximation, and then repeatedly
optimizing this lower bound.  Specifically, the process follows the procedure
shown in Algorithm~\ref{alg:bundle}. Denoting the iterates of the algorithm as
$y^k$, at each iteration of the algorithm, we compute the first order
approximation to the function
\begin{equation}
f(x, y^k;\theta) + \nabla_y f(x, y^k;\theta)^T (y - y^k)
\end{equation}
and update the next iteration by solving the optimization problem
\begin{equation}
y^{k+1} := \argmin_{y \in \mathcal{Y}} \max_{1\leq i\leq k} \{f(x, y^i;\theta) + \nabla_y f
(x, y^i;\theta)^T (y - y^i)\}.
\end{equation}
A bit more concretely, the optimization problem can be written via a set of linear
inequality constraints
\begin{equation}
y^{k+1},t^{k+1} := \argmin_{y \in \mathcal{Y}, t} \;\; \{t \mid G y + h \leq
t1\}
\end{equation}
where $G \in \mathbb{R}^{k \times n}$ has rows equal to
\begin{equation}
g_i^T = \nabla_y f (x, y^i;\theta)^T
\end{equation}
and $h \in \mathbb{R}^k$ has entries equal to
\begin{equation}
h_i = f(x, y^i;\theta) - \nabla_y f (x, y^i;\theta)^T y^i.
\end{equation}

\begin{algorithm}[t]
  \caption{A typical bundle method to optimize $f: \R^{m\times n}\rightarrow \R$
    over $\R^n$ for $K$ iterations with a fixed $x$ and initial starting point $y^1$.}
  \begin{algorithmic}[0]
    \Function{BundleMethod}{$f$, $x$, $y^1$, $K$}
    \Let{$G$}{$0\in\R^{K\times n}$}
    \Let{$h$}{$0\in\R^{K}$}
    \For{$k = 1,K$}
    \Let{$G_k^T$}{$\nabla_y f(x, y^k; \theta)^T$} \Comment{$k$th row of $G$}
    \Let{$h_k$}{$f(x, y^k; \theta) - \nabla_y f(x, y^k; \theta)^Ty^k$}
    \Let{$y^{k+1}$, $t^{k+1}$}{
      $\argmin_{y \in \mathcal{Y}, t} \;\; \{t \mid G_{1:k} y + h_{1:k} \leq t1\}$}
    \EndFor
    \State \Return $y^{K+1}$
    \EndFunction
  \end{algorithmic}
  \label{alg:bundle}
\end{algorithm}

\section{Bundle Entropy Algorithm}
\label{sec:bundle-entropy-alg}
In Algorithm~\ref{alg:bundle-entropy}.
\begin{algorithm}[t]
  \caption{Our bundle entropy method to optimize $f: \R^m\times [0,1]^n\rightarrow \R$
    over $[0,1]^n$
    for $K$ iterations with a fixed $x$ and initial starting point $y^1$.
  }
  \begin{algorithmic}[0]
    \Function{BundleEntropyMethod}{$f$, $x$, $y^1$, $K$}
    \Let{$G_\ell$}{$[\ ]$}
    \Let{$h_\ell$}{$[\ ]$}
    \For{$k = 1,K$}
    \State \Call{Append}{$G_\ell$, $\nabla_y f(x, y^k; \theta)^T$}
    \State \Call{Append}{$h_\ell$, $f(x, y^k; \theta) - \nabla_y f(x, y^k; \theta)^Ty^k$}
    \Let{$a_k$}{\Call{Length}{$G_\ell$}}\Comment{The number of active constraints.}
    \Let{$G_k$}{\Call{Concat}{$G_\ell$}$\in\R^{a_k\times n}$}
    \Let{$h_k$}{\Call{Concat}{$h_\ell$}$\in\R^{a_k}$}
    \If{$a_k = 1$}
    \Let{$\lambda_k$}{1}
    \Else
    \Let{$\lambda_k$}{\Call{ProjNewtonLogistic}{$G_k$, $h_k$}}
    \EndIf
    \Let{$y^{k+1}$}{$(1+\exp(G_k^T\lambda_k))^{-1}$}

    \State \Call{Delete}{$G_\ell[i]$ and $h_\ell[i]$ where $\lambda_i \leq 0$}
    \Comment{Prune inactive constraints.}
    \EndFor
    \State \Return $y^{K+1}$
    \EndFunction
  \end{algorithmic}
  \label{alg:bundle-entropy}
\end{algorithm}

\section{Deep Q-learning with ICNNs}
\label{sec:icnn-rl}
In Algorithm~\ref{alg:icnn-rl}.

\begin{algorithm}[t]
  \caption{Deep Q-learning with ICNNs.
    \texttt{Opt-Alg} is a convex minimization algorithm such as
    gradient descent or the bundle entropy method.
    $\tilde Q_\theta$ is the objective the optimization algorithm solves.
    In gradient descent, $\tilde Q_\theta(s,a) = Q(s, a|\theta)$ and
    with the bundle entropy method, $\tilde Q_\theta(s,a) = Q(s, a|\theta) + H(a)$.
  }
  \begin{algorithmic}
    \State{Select a discount factor $\gamma\in(0,1)$ and moving average factor
      $\tau\in(0,1)$}
    \State{Initialize the ICNN $-Q(s, a|\theta)$ with
      target network parameters $\theta'\leftarrow\theta$
      and a replay buffer $R\leftarrow\emptyset$}
    \For{each episode $e=1,E$}
    \State{Initialize a random process $\mathcal{N}$ for action exploration}
    \State{Receive initial observation state $s_1$}
    \For{$i=1,I$}
    \Let{$a_i$}{\Call{Opt-Alg}{$-Q_\theta$, $s_i$, $a_{i,0}$}+$\mathcal{N}_i$}
    \Comment{For some initial action $a_{i,0}$}
    \State{Execute $a_i$ and observe $r_{i+1}$ and $s_{i+1}$}
    \State \Call{Insert}{$R$, $(s_i, a_i, s_{i+1}, r_{i+1})$}
    \State{Sample a random minibatch from the replay buffer: $R_M\subseteq R$}
    \For{$(s_m, a_m, s_m^+, r_m^+)\in R_M$}
    \Let{$a_m^+$}{\Call{Opt-Alg}{$-Q_{\theta'}$,$s_{m}^+$,$a_{m,0}^+$}}
    \Comment{Uses the target parameters $\theta'$}
    \Let{$y_m$}{$r_m^+ + \gamma Q(s_m^+, a_m^+|\theta')$}
    \EndFor
    \State{Update $\theta$ with a gradient step to minimize
      $\mathcal{L} = \frac{1}{|R_M|}\sum_m\big(\tilde Q(s_m, a_m|\theta)-y_m\big)^2$}
    \Let{$\theta'$}{$\tau\theta + (1-\tau)\theta'$}
    \Comment{Update the target network.}
    \EndFor
    \EndFor
  \end{algorithmic}
  \label{alg:icnn-rl}
\end{algorithm}

\newpage
\section{Max-margin structured prediction}
\label{sec:max-margin}

In the more traditional structured prediction setting, where we do not aim to
fit the energy function directly but fit the predictions made by the system to
some target outputs, there are different possibilities for learning the ICNN
parameters.  One such method is based upon the max-margin structured
prediction framework \citep{tsochantaridis2005large,taskar2005learning}.  Given
some training example $(x, y^\star)$, we would like to require that this example
has a joint energy that is lower than all other possible values for $y$.  That
is, we want the function $\tilde{f}$ to satisfy the constraint
\begin{equation}
\tilde{f}(x, y^\star;\theta) \leq \min_y \tilde{f}(x,y;\theta)
\end{equation}
Unfortunately, these conditions can be trivially fit by choosing a constant
$\tilde{f}$ (although the entropy term alleviates this problem slightly, we can
still choose an approximately constant function), so instead the max-margin
approach adds a margin-scaling term that requires this gap to be larger for $y$
further from $y^\star$, as measured by some loss function $\Delta(y,y^\star)$.
Additionally adding slack variables to allow for potential violation of these
constraints, we arrive at the typical max-margin structured prediction
optimization problem
\begin{equation}
\begin{split}
\minimize_{\theta,\xi \geq 0} \;\; & \frac{\lambda}{2}\|\theta\|_2^2 + \sum_
{i=1}^m
\xi_i \\
\subjectto \;\; &\tilde{f}(x_i,y_i;\theta) \leq \min_{y \in \mathcal{Y}} \left
(\tilde{f}(x_i,y;\theta) - \Delta(y_i,y) \right ) - \xi_i
\end{split}
\label{eq:structured-prediction}
\end{equation}
As a simple example, for multiclass classification tasks where $y^\star$ denotes
a ``one-hot'' encoding of examples, we can use a multi-variate entropy term and
let $\Delta (y,y^\star) = {y^\star}^T (1 - y)$. Training
requires solving this ``loss-augmented'' inference problem, which is convex
for suitable choices of the margin scaling term.

The optimization problem \eqref{eq:structured-prediction} is naturally still
\emph{not convex} in $\theta$, but can be solved via the subgradient method
for structured prediction \citep{ratliff2007approximate}.  This algorithm
iteratively selects a training example $x_i, y_i$, then 1) solves the
optimization problem
\begin{equation}
y^\star = \argmin_{y \in \mathcal{Y}} f(x_i,y;\theta) - \Delta(y_i,y)
\end{equation}
and 2) if the margin is violated, updates the network's parameters according
to the subgradient
\begin{equation}
\theta := \mathcal{P}_+\left [ \theta - \alpha \left(
                \lambda\theta +
                \nabla_\theta f(x_i,y_i,\theta) -
                \nabla_\theta f (x_i, y^\star;\theta)\right)\right ]
\end{equation}
where $\mathcal{P}_+$ denotes the projection of $W^{(z)}_{1:k-1}$ onto the non-negative
orthant. This method can be easily adapted to use mini-batches instead of a
single example per subgradient step, and also adapted to alternative optimization
methods like AdaGrad \citep{duchi2011adaptive} or ADAM \citep{kingma2014adam}.
Further, a fast approximate solution to $y^\star$ can be used instead
of the exact solution.


\section{Proof of Proposition \ref{proposition-gradient}}
\label{sec:argmin-diff-proof}
\begin{proof}[Proof (of Proposition \ref{proposition-gradient}).]
We have by the chain rule that
\begin{equation}
\frac{\partial \ell }{\partial \theta} =
\frac{\partial \ell}{\partial \hat{y}} \left
( \frac{\partial \hat{y}}{\partial G} \frac{\partial G}{\partial \theta} +
\frac{\partial \hat{y}}{\partial h} \frac{\partial h}{\partial \theta}
\right).
\end{equation}
The challenging terms to compute in this equation are the $\frac{\partial \hat
{y}}
{\partial G}$ and $\frac{\partial \hat{y}}{\partial h}$ terms.  These can be
computed (although we will ultimately not compute them explicitly, but just
compute the product of these matrices and other terms in the Jacobian), by
implicit differentiation of the KKT conditions.  Specifically, the
KKT conditions of the bundle entropy method (considering only the active
constraints at the solution) are given by
\begin{equation}
\begin{split}
1 + \log \hat{y} - \log (1-\hat{y}) + G^T \lambda & = 0 \\
G\hat{y} + h - t1 & = 0 \\
1^T \lambda & = 1.
\end{split}
\end{equation}
For simplicity of presentation, we consider first the Jacobian with respect to
$h$.  Taking differentials of these equations with respect to $h$ gives
\begin{equation}
\begin{split}
\diag\left(\frac{1}{\hat{y}} + \frac{1}{1-\hat{y}}\right) \dd y + G^T \dd
\lambda & = 0 \\
G \dd y + \dd h - \dd t 1 & = 0 \\
1^T \dd \lambda & = 0
\end{split}
\end{equation}
or in matrix form
\begin{equation}
\left [ \begin{array}{ccc}
\diag\left(\frac{1}{\hat{y}} + \frac{1}{1-\hat{y}}\right) & G^T & 0 \\
G & 0 & -1 \\
0 & -1^T & 0 \end{array} \right ]
\left [ \begin{array}{c} \dd y \\ \dd \lambda \\ \dd t \end{array} \right ]
= \left [ \begin{array}{c} 0 \\ -\dd h \\ 0 \end{array} \right ].
\end{equation}
To compute the Jacobian $\frac{\partial \hat{y}}{\partial h}$ we can solve the
system above with the right hand side given by $\dd h = I$, and the resulting
$\dd y$ term will be the corresponding Jacobian.  However, in our ultimate
objective we always left-multiply the proper terms in the above equation by
$\frac{\partial \ell}{\partial \hat{y}}$.  Thus, we instead define
{\small
\begin{equation}
\left [ \begin{array}{c} c^y \\ c^\lambda \\ c^t \end{array} \right ]
 = \left [ \begin{array}{ccc}
\diag\left(\frac{1}{\hat y} + \frac{1}{1-\hat y}\right) & G^T & 0 \\
G & 0 & -1 \\
0 & -1^T & 0 \end{array} \right ]^{-1}
\left [ \begin{array}{c} -(\frac{\partial \ell}{\partial \hat y})^T \\ 0 \\ 0
\end{array} \right ]
\end{equation}
}
and we have the the simple formula for the Jacobian product
\begin{equation}
\frac{\partial \ell}{\partial \hat{y}} \frac{\partial \hat{y}}{\partial h} =
(c^\lambda)^T.
\end{equation}

A similar set of operations taking differentials with respect to $G$ leads to
the matrix equations
{\small
\begin{equation}
\left [ \begin{array}{ccc}
\diag\left(\frac{1}{\hat y} + \frac{1}{1-\hat y}\right) & G^T & 0 \\
G & 0 & -1 \\
0 & -1^T & 0 \end{array} \right ]
\left [ \begin{array}{c} \dd y \\ \dd \lambda \\ \dd t \end{array} \right ]
= \left [ \begin{array}{c} -\dd G^T \lambda \\ - \dd G y \\ 0 \end{array}
\right ]
\end{equation}
}
and the corresponding Jacobian products / gradients are given by
\begin{equation}
\frac{\partial \ell}{\partial \hat{y}} \frac{\partial \hat{y}}{\partial
G} = c^y \lambda^T + \hat{y} (c^\lambda)^T.
\end{equation}
Finally, using the definitions that
\begin{equation}
g_i^T = \nabla_y f (x, y^i;\theta)^T,\;\;  h_i = f(x, y^k;\theta) -
\nabla_y f(x,y^i;\theta)^T y^i
\end{equation}
we recover the formula presented in the proposition.
\end{proof}

\section{State and action space sizes in the OpenAI gym MuJoCo benchmarks.}
\label{sec:gym-szs}
\begin{table}[H]
  \centering
  \begin{tabular}{rrr}
    Environment & \# State & \# Action \\ \hline
    InvertedPendulum-v1	& 4 & 1 \\
    InvertedDoublePendulum-v1	& 11 & 1 \\
    Reacher-v1	& 11 & 2 \\
    HalfCheetah-v1	& 17 & 6 \\
    Swimmer-v1	& 8 & 2 \\
    Hopper-v1	& 11 & 3 \\
    Walker2d-v1	& 17 & 6 \\ 
    Ant-v1	& 111 & 8 \\
    Humanoid-v1	& 	376 & 17 \\
    HumanoidStandup-v1	& 	376 & 17 \\
  \end{tabular}
  \caption{State and action space sizes in the OpenAI gym MuJoCo benchmarks.}
  \label{tab:gym-szs}
\end{table}

\section{Synthetic classification examples}
\label{sec:synthetic}
\begin{figure*}[t]
  \centering
  \begin{minipage}{0.6\textwidth}
    \includegraphics[width=\textwidth]{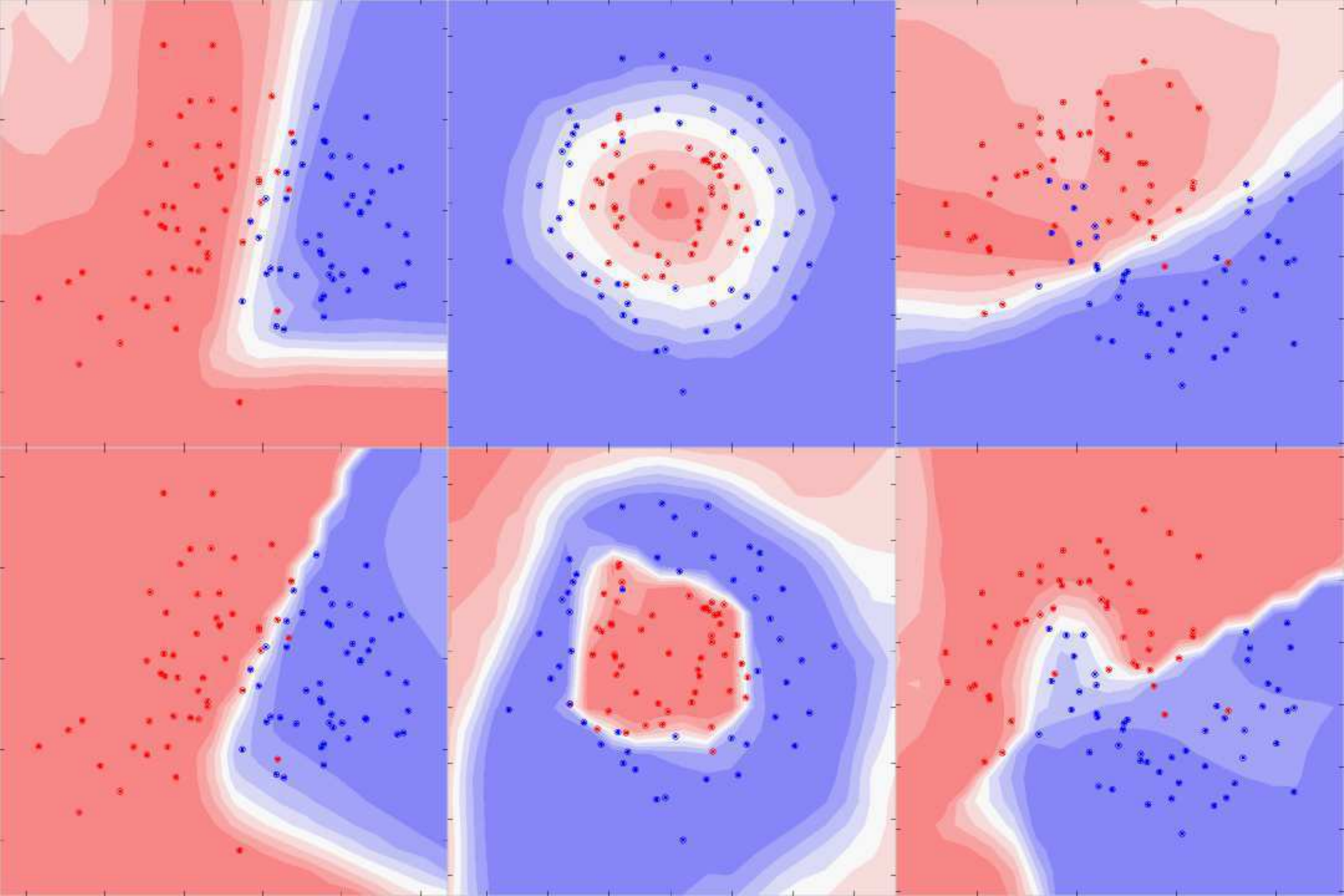}
  \end{minipage}
  \begin{minipage}{10mm}
    \includegraphics[width=10mm]{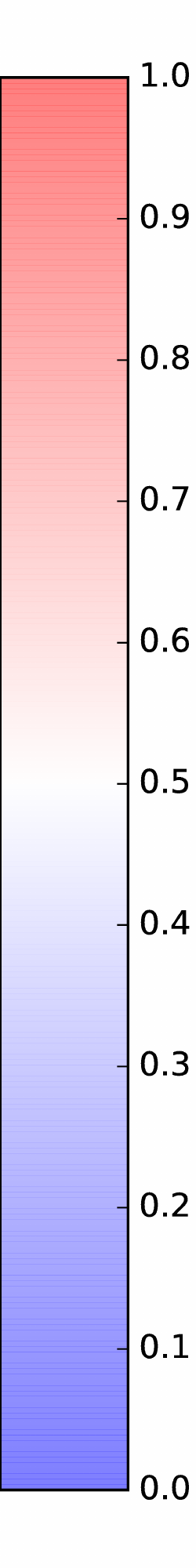}
  \end{minipage}
  \caption{
    FICNN (top) and PICNN (bottom) classification of synthetic non-convex
    decision boundaries. Best viewed in color.
  }
  \label{fig:exp:synthetic}
\end{figure*}

We begin with a simple example to illustrate the classification performance of a
two-hidden-layer FICNN and PICNN on two-dimensional binary classification
tasks from the scikit-learn toolkit \citep{pedregosa2011scikit}.
Figure~\ref{fig:exp:synthetic} shows the classification performance on the dataset.
The FICNN's energy function which is fully convex in $\cal{X}\times\cal{Y}$ jointly is
able to capture complex, but sometimes restrictive decision boundaries.
The PICNN, which is nonconvex over $\cal{X}$ but convex over $\cal{Y}$
overcomes these restrictions and can capture more complex decision boundaries.

\section{Multi-Label Classification Training Plots}
\label{sec:exp:ml:f1}
In Figure~\ref{fig:exp:ml:f1}.
\begin{figure*}[t]
  \centering
  \includegraphics[width=0.35\textwidth]{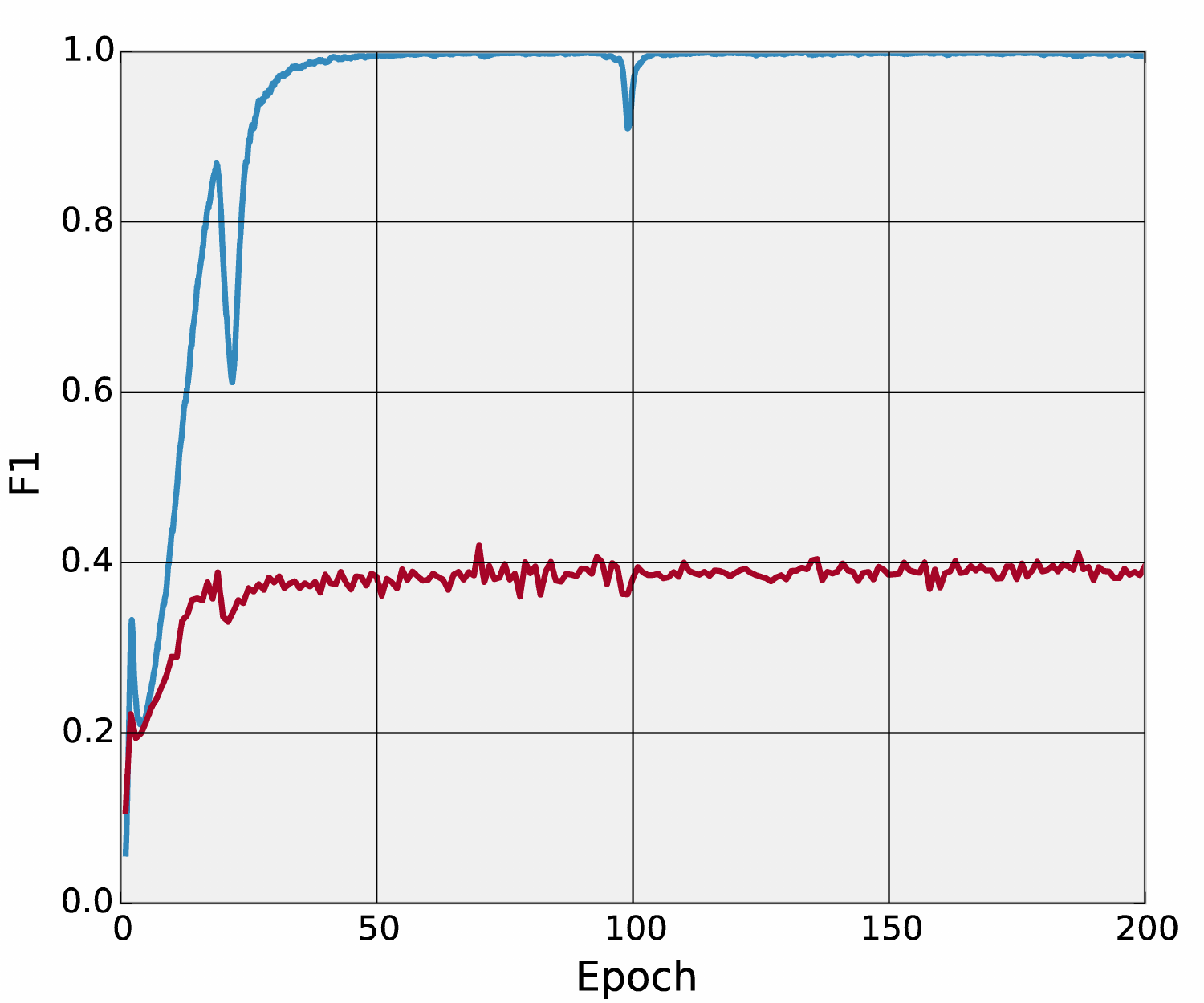}
  \includegraphics[width=0.35\textwidth]{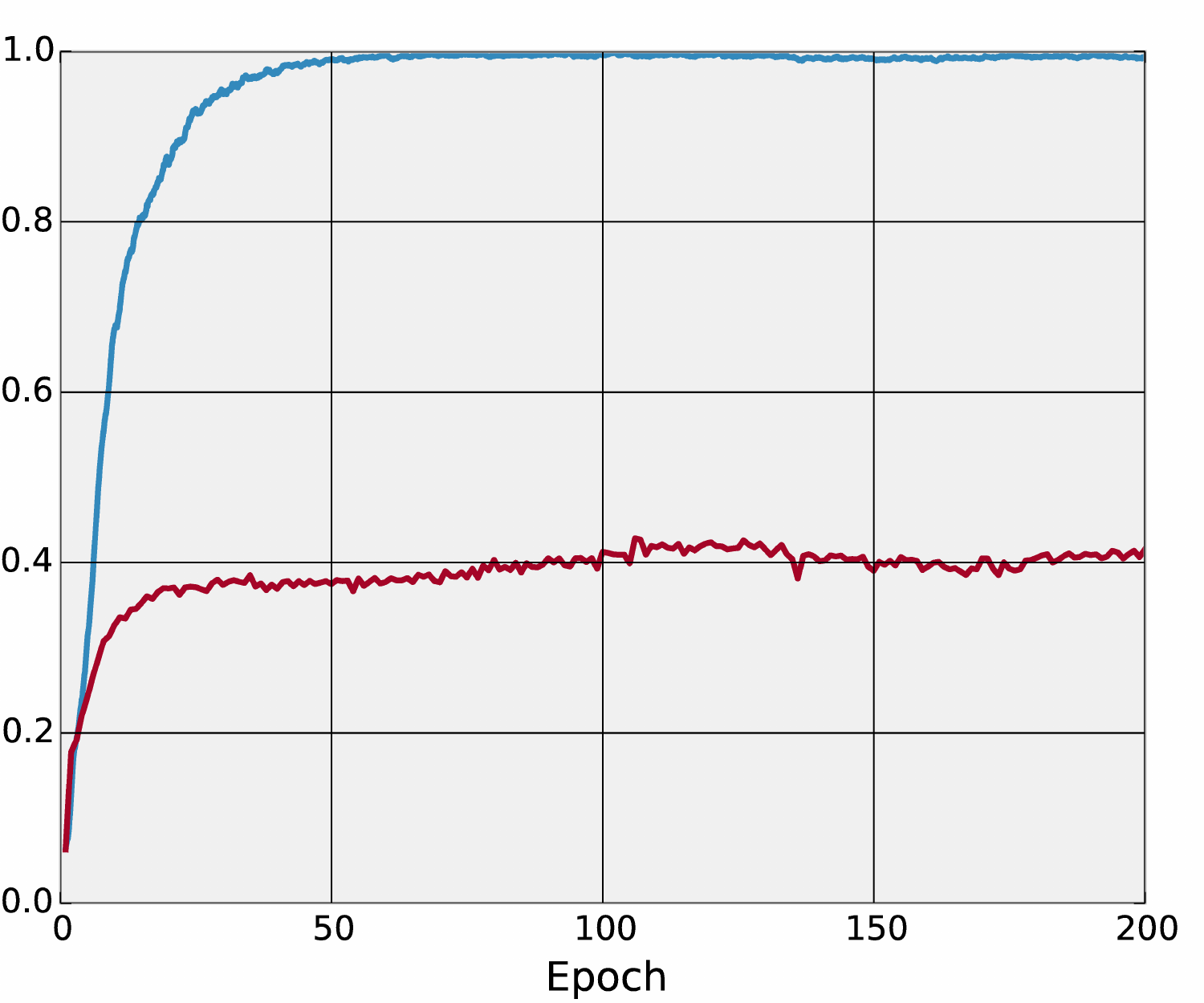}
  \caption{
    Training (blue) and test (red) macro-F1 score of
    a feedforward network (left) and PICNN (right) on the BibTeX
    multi-label classification dataset.
    The final test F1 scores are 0.396 and 0.415, respectively.
    (Higher is better.)
  }
  \label{fig:exp:ml:f1}
\end{figure*}

\section{Image Completion}
\label{sec:appendix-completion}

The losses are in Figure~\ref{fig:exp:olivetti:loss}.

\begin{figure*}[t]
  \centering
  \includegraphics[width=\textwidth]{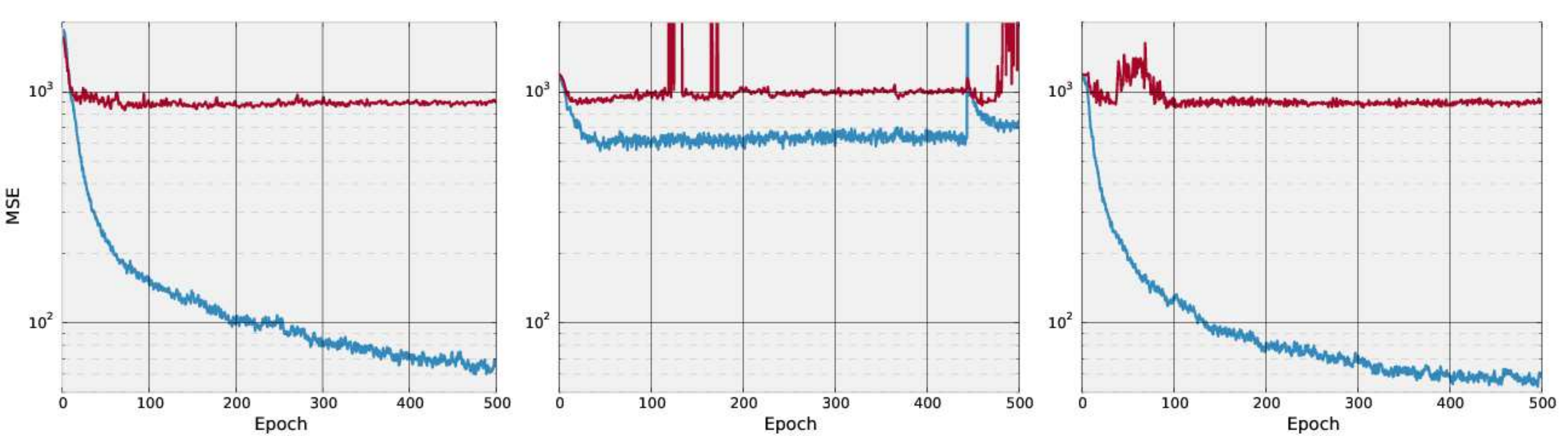}
  \caption{Mean Squared Error (MSE) on the train
    (blue, rolling over 1 epoch) and test (red) images
    from Olivetti faces for PICNNs trained with the
    bundle entropy method (left) and back optimization (center),
    and back optimization with the convexity constraint relaxed (right).
    The minimum test MSEs are 833.0, 872.0, and 850.9, respectively.
  }
  \label{fig:exp:olivetti:loss}
\end{figure*}

\end{document}